\newtheorem{mydef}{Definition}
\newtheorem{mylemma}{Lemma}
\newtheorem{mytheorem}{Theorem}
\journal{Science of Computer Programming}
\begin{document}

\begin{frontmatter}

%% Title, authors and addresses

%% use the tnoteref command within \title for footnotes;
%% use the tnotetext command for theassociated footnote;
%% use the fnref command within \author or \address for footnotes;
%% use the fntext command for theassociated footnote;
%% use the corref command within \author for corresponding author footnotes;
%% use the cortext command for theassociated footnote;
%% use the ead command for the email address,
%% and the form \ead[url] for the home page:
%% \title{Title\tnoteref{label1}}
%% \tnotetext[label1]{}
%% \author{Name\corref{cor1}\fnref{label2}}
%% \ead{email address}
%% \ead[url]{home page}
%% \fntext[label2]{}
%% \cortext[cor1]{}
%% \address{Address\fnref{label3}}
%% \fntext[label3]{}

\title{Counterexample Guided Inductive Optimization}

%% use optional labels to link authors explicitly to addresses:
 \author[label1]{Rodrigo F. Ara\'ujo}
 \author[label2]{Higo F. Albuquerque}
 \author[label2]{Iury V. de Bessa}
 \author[label3]{Lucas C. Cordeiro}
 \author[label2]{Jo{\~a}o Edgar C. Filho}
\address[label1]{Federal Institute of Amazonas, Campus Manaus Distrito Industrial, Avenida Governador Danilo Areosa, 1672, Distrito Industrial, 69075351, Manaus, AM, Brasil}
\address[label2]{Department of Electricity, Federal University of Amazonas, Avenida General Rodrigo Oct{\'a}vio, 6200, Coroado I, Manaus, AM, 69077-000}
\address[label3]{Department of Computer Science, University of Oxford, Wolfson Building, Parks Road, OXFORD, OX1 3QD, UK}

%\author{Iury V. de Bessa}

\address{}

\begin{abstract}
%% Text of abstract

This paper describes three variants of a counterexample guided inductive optimization (CEGIO) approach based on Satisfiability Modulo Theories (SMT) solvers. In particular, CEGIO relies on iterative executions to constrain a verification procedure, in order to perform inductive generalization, based on counterexamples extracted from SMT solvers. CEGIO is able to successfully optimize a wide range of functions, including non-linear and non-convex optimization problems based on SMT solvers, in which data provided by counterexamples are employed to guide the verification engine, thus reducing the optimization domain. The present algorithms are evaluated using a large set of benchmarks typically employed for evaluating optimization techniques. Experimental results show the efficiency and effectiveness of the proposed algorithms, which find the optimal solution in all evaluated benchmarks, while traditional techniques are usually trapped by local minima.

\end{abstract}

\begin{keyword}
%% keywords here, in the form: keyword \sep keyword

%% PACS codes here, in the form: \PACS code \sep code

%% MSC codes here, in the form: \MSC code \sep code
%% or \MSC[2008] code \sep code (2000 is the default)
Satisfiability Modulo Theories  (SMT) \sep Model checking \sep Global optimization \sep Non-convex optimization
\end{keyword}

\end{frontmatter}

%% \linenumbers

%% main text

%----------------------------
\section{Introduction}
\label{sec:intro}
%----------------------------

Optimization is an important research topic in many fields, especially in computer science and engineering~\cite{deb2004optimization}. Commonly, scientists and engineers have to find parameters, which optimize the behavior of a given system or the value of a given function ({\it i.e.}, an optimal solution). Optimization characterizes and distinguishes the engineering gaze over a problem; for this particular reason, previous studies showed that optimization is one of the main differences between engineering design and technological design~\cite{Gattie2007}.

Computer science and optimization maintain a symbiotic relationship. Many important advances of computer science are based on optimization theory. As example, planning and decidability problems ({\it e.g.}, game theory~\cite{Shoham2008}), resource allocation problems ({\it e.g.}, hardware/software co-design~\cite{Teich2012}), and computational estimation and approximation ({\it e.g.}, numerical analysis~\cite{kowalski1995selected}) represent important optimization applications. Conversely, computer science plays an important role in recent optimization studies, developing efficient algorithms and providing respective tools for supporting model management and results analysis~\cite{derigs2009optimization}.

There are many optimization techniques described in the literature ({\it e.g.}, simplex~\cite{garfinkel1972integer}, gradient descent~\cite{Bartholomew--Biggs2008}, and genetic algorithms~\cite{goldberg1989genetic}), which are suitable for different classes of optimization problems ({\it e.g.}, linear or non-linear, continuous or discrete, convex or non-convex, and single- or multi-objective). These techniques are usually split into two main groups: deterministic and stochastic optimization. Deterministic optimization is the classic approach for optimization algorithms, which is based on calculus and algebra operators, {\it e.g.}, gradients and Hessians~\cite{cavazzuti2012optimization}.  Stochastic optimization employs randomness in the optima search procedure~\cite{cavazzuti2012optimization}. This paper presents a novel class of search-based optimization algorithm that employs non-deterministic representation of decision variables and constrains the state-space search based on counterexamples produced by an SMT solver, in order to ensure the complete global optimization without employing randomness. This class of techniques is defined here as \textit{counterexample guided inductive optimization} (CEGIO), which is inspired by the syntax-guided synthesis (SyGuS) to perform inductive generalization based on counterexamples provided by a verification oracle~\cite{AlurSyGus2013}.

%Given the widespread development and use of embedded systems, several optimization methods have been applied to embedded system design \textcolor{red}{cite some paper here}. Due to resource limitation, embedded system design typically includes optimization of cost functions related to power consumption, memory usage, and silicon area~\cite{Panda2001TODAES}. Actually, optimization problems are commonly solved by computer-based systems; however, several embedded system applications include optimization problem solving during their execution ({\it e.g.}, autonomous vehicles navigation systems~\cite{adouane2016autonomous}). As a result, effective and efficient optimization algorithms are needed for leveraging the proliferation on the use and application of embedded systems.

Particularly, a continuous non-convex optimization problem is one of the most complex problems. As a result, several traditional methods ({\it e.g.}, Newton-Raphson~\cite{deb2004optimization} and Gradient Descent~\cite{Bartholomew--Biggs2008}) are inefficient to solve that specific class of problems~\cite{deb2004optimization}. Various heuristics are developed for obtaining approximated solutions to those problems; heuristics methods ({\it e.g.}, ant colony~\cite{Dorigo2006} and genetic algorithms~\cite{goldberg1989genetic}) offer faster solutions for complex problems, but they sacrifice the system's correctness and are easily trapped by local optimal solutions.

This paper presents a novel counterexample guided inductive optimization technique based on SMT solvers, which is suitable for a wide variety of functions, even for non-linear and non-convex functions, since most real-world optimization problems are non-convex. The function evaluation and the search for the optimal solution is performed by means of an iterative execution of successive verifications based on counterexamples extracted from SMT solvers. The counterexample provides new domain boundaries and new optima candidates. In contrast to other heuristic methods ({\it e.g.}, genetic algorithms), which are usually employed for optimizing this class of function, the present approaches always find the global optimal point.

This study extends the previous work of Ara\'{u}jo {\it et al.}~\cite{Araujo2016} and presents three variants of a counterexample guided inductive optimization approach based on SMT solvers, which improve the technique performance for specific class of functions. Furthermore, the experimental evaluation is largely expanded, since the algorithms are executed for additional optimization problems and the performance of each proposed algorithm is compared to six well-known optimization techniques. The present CEGIO approaches are able to find the correct global minima for 100\% of the benchmarks, while other techniques are usually trapped by local minima, thus leading to incorrect solutions.

%--------------------------------
\subsection{Contributions}
%--------------------------------

Our main original contributions are:

\begin{itemize}
\item{\textbf{Novel counterexample guided inductive optimization approach}. This work describes three novel variants of a counterexample guided inductive optimization approach based on SMT solvers: generalized, simplified, and fast algorithms. The generalized algorithm can be used for any constrained optimization problem and presents minor improvements w.r.t. Ara\'{u}jo {\it et al.}~\cite{Araujo2016}. The simplified algorithm is faster than the generalized one and can be employed if information about the minima location is provided, {\it e.g.}, the cost function is semi-definite positive. The fast algorithm presents a  significant speed-up if compared to the generalized and simplified ones,  but it can only be employed for convex functions.}

\item{\textbf{Convergence Proofs}. This paper presents proofs of convergence and completeness (omitted in Ara\'{u}jo {\it et al.}~\cite{Araujo2016}) for the proposed counterexample guided inductive optimization algorithms.}

\item{\textbf{SMT solvers performance comparison}. The experiments are performed with three different SMT solvers: Z3~\cite{z3}, Boolector~\cite{boolector}, and MathSAT~\cite{mathsat5}. The experimental results show that the solver choice can heavily influence the method performance.}

\item{\textbf{Additional benchmarks}. The benchmark suite is expanded to $30$ optimization functions extracted from the literature~\cite{functionslist}.}

\item{\textbf{Comparison with existing techniques.} The proposed technique is compared to genetic algorithm~\cite{goldberg1989genetic}, particle swarm~\cite{pswarmref:olsson2011particle}, pattern search~\cite{patternsref:alberto2004pattern}, simulated annealing~\cite{saref:Laarhoven:1987:SAT:59580}, and nonlinear programming~\cite{npref:byrd2000trust}, which are traditional optimization techniques employed for non-convex functions.}
\end{itemize}

\subsection{Availability of Data and Tools}

Our experiments are based on a set of publicly available benchmarks. All tools, benchmarks, and results of our evaluation are available on a supplementary web page \texttt{http://esbmc.org/benchmarks/jscp2017.zip}.

\subsection{Outline}

Section~\ref{sec:related} discusses related studies. Section~\ref{sec:preliminaries} provides an overview of optimization problems and techniques, and describes background on software model checking. Section~\ref{sec:smtoptmodel} describes the ANSI-C model developed for optimization problems that is suitable for the counterexample guided inductive optimization procedure. Section~\ref{sec:smtoptnonconvex} describes the generalized and simplified optimization algorithms and respective completeness proofs. Section~\ref{sec:smtoptconvex} describes the fast optimization algorithm and respective completeness proof. Section~\ref{sec:experiments} reports the experimental results for evaluating all proposed optimization algorithms, while Section~\ref{sec:conc} concludes this work and proposes further studies.

%----------------------------
\section{Related Work}
\label{sec:related}
%----------------------------

SMT solvers have been widely applied to solve several types of verification, synthesis, and optimization problems. They are typically used to check the satisfiability of a logical formula, returning assignments to variables that evaluate the formula to true, if it is satisfiable; otherwise, the formula is said to be unsatisfiable. Nieuwenhuis and Oliveras~\cite{Nieuwenhuis2006} presented the first research about the application of SMT to solve optimization problems. Since then, SMT solvers have been used to solve different optimization problems, {\it e.g.}, minimize errors in linear fixed-point arithmetic computations in embedded control software~\cite{Eldib2014}; reduce the number of gates in FPGA digital circuits~\cite{Estrada2003}; hardware/software partition in embedded systems to decide the most efficient system implementation~\cite{TrindadeSBESC2015, TrindadeSBESC2013, TrindadeDAES2016}; and schedule applications for a multi-processor platform~\cite{Cotton2011}. All those previous studies use SMT-based optimization over a Boolean domain to find the best system configuration given a set of metrics. In particular, in Cotton {\it et al.}~\cite{Cotton2011} the problem is formulated as a multi-objective optimization problem. Recently, Shoukry \textit{et al.}~\cite{CDC2016} proposed a scalable solution for synthesizing a digital controller and motion planning for under-actuated robots from LTL specifications. Such solution is more flexible and allows solving a wider variety of problems, but they are focused on optimization problems that can be split into a Boolean part and other convex part. 

In addition, there were advances in the development of different specialized SMT solvers that employ generic optimization techniques to accelerate SMT solving, {\it e.g.}, the \texttt{ABsolver}~\cite{ABsolver}, which is used for automatic analysis and verification of hybrid-system and control-system. The \texttt{ABsolver} uses a non-linear optimization tool for Boolean and polynomial arithmetic and a lazy SMT procedure to perform a faster satisfiability checking. Similarly, \texttt{CalCs}~\cite{Nuzzo2010} is also an SMT solver that combines convex optimization and lazy SMT to determine the satisfiability of conjunctions of convex non-linear constraints. Recently, Shoukry \textit{et al.}~\cite{SMC} show that a particular class of logic formulas (named SMC formulas) generalizes a wide range of formulas over Boolean and nonlinear real arithmetic, and propose the Satisfiability Modulo Convex Optimization to solve satisfiability problems over SMC formulas. Our work differs from those previous studies~\cite{ABsolver,Nuzzo2010,SMC} since it does not focus on speeding up SMT solvers, but it employs an SMT-based model-checking tool to guide (via counterexample) an optimization search procedure in order to ensure the global optimization.

Recently, $\nu Z$~\cite{Bjorner2015} extends the SMT solver Z3 for linear optimization problems; Li \textit{et al.} proposed the \texttt{SYMBA} algorithm~\cite{Symba}, which is an SMT-based symbolic optimization algorithm that uses the theory of linear real arithmetic and SMT solver as black box. Sebastiani and Trentin~\cite{SebastianiCAV2015} present \texttt{OptiMathSat}, which is an optimization tool that extends MathSAT5 SMT solver to allow solving linear functions in the Boolean, rational, and integer domains or a combination of them; in Sebastiani and Tomasi~\cite{SebastianiTransactions2015}, the authors used a combination of SMT and LP techniques to minimize rational functions; the related work~\cite{SebastianiTACAS2015} extends their work with linear arithmetic on the mixed integer/rational domain, thus combining SMT, LP, and ILP techniques. 

As an application example, Pavlinovic {\it et al.} \cite{Pavlinovic0W15} propose an approach which considers all possible compiler error sources for statically typed functional programming languages and reports the most useful one subject to some usefulness criterion. The authors formulate this approach as an optimization problem related to SMT and use $\nu$Z to compute an optimal error source in a given ill-typed program. The approach described by Pavlinovic {\it et al.}, which uses MaxSMT solver $\nu$Z, shows a significant performance improvement if compared to previous SMT encodings and localization algorithms.

Most previous studies related to SMT-based optimization can only solve linear problems over integer, rational, and Boolean domains in specific cases, leading to limitations in practical engineering applications. Only a few studies~\cite{CDC2016} are able to solve non-linear problems, but they are also constrained to convex functions. In contrast, this paper proposes a novel counterexample guided inductive optimization method based on SMT solvers to minimize functions, linear or non-linear, convex or non-convex, continuous or discontinuous. As a result, the proposed methods are able to solve optimization problems directly on the rational domain with adjustable precision, without using any other technique to assist the state-space search. Furthermore, our proposed methods employ a model-checking tool to generate automatically SMT formulas from an ANSI-C model of the optimization problem, which makes the representation of problems for SMT solving easy for engineers.

%----------------------------
\section{Preliminaries}
\label{sec:preliminaries}
%----------------------------

%----------------------------
\subsection{Optimization Problems Overview}
\label{ssec:classification}
%----------------------------

Let $f:X\rightarrow\mathbb{R}$ be a cost function, such that $X\subset \mathbb{R}^n$ represents the decision variables vector $x_{1},x_{2},...,x_{n}$ and $f(x_{1},x_{2},...,x_{n})\equiv f(\textbf{x})$. Let $\Omega\subset X$ be a subset settled by a set of constraints. 

\begin{mydef}
\label{def:multivariable}
A multi-variable optimization problem consists in finding an optimal vector $\textbf{x}$, which minimizes $f$ in $\Omega$. 
\end{mydef}

According to Definition~\ref{def:multivariable}, an optimization problem can be written as
\begin{equation}
\label{eq:optproblem}
\begin{array}{cc}
\min & f(\textbf{x}),  \\
\textrm{ s.t. } & \textbf{x}\in\Omega.\\
\end{array}
\end{equation}

In particular, this optimization problem can be classified in different ways w.r.t. constraints, decision variables domain, and nature of cost function $f$. All optimization problems considered here are constrained, {\it i.e.}, decision variables are constrained by the subset $\Omega$. The optimization problem domain $X$ that contains $\Omega$ can be the set of $\mathbb{N}$, $\mathbb{Z}$, $\mathbb{Q}$, or $\mathbb{R}$. Depending on the domain and constraints, the optimization search-space can be small or large, which influences the optimization algorithms performance. 

The cost function can be classified as linear or non-linear; continuous, discontinuous or discrete; convex or non-convex. Depending on the cost function nature, the optimization problem can be hard to solve, given the time and memory resources~\cite{Galperin19911}. Particularly, non-convex optimization problems are the most difficult ones w.r.t. the cost function nature. A non-convex cost function is a function whose epigraph is a non-convex set and consequently presents various inflexion points that can trap the optimization algorithm to a sub-optimal solution. A non-convex problem is necessarily a non-linear problem and it can also be discontinuous. Depending on that classification, some optimization techniques are unable to solve the optimization problem, and some algorithms usually point to suboptimal solutions, {\it i.e.}, a solution that is not a global minimum of $f$, but it only locally minimizes $f$. Global optimal solutions of the function $f$, aforementioned, can be defined as
\begin{mydef}
\label{def:global}
A vector $\textbf{x}^{*} \in \Omega$ is a global optimal solution of $f$ in $\Omega$ {\it iff} $f(\textbf{x}^{*})\leq f(x), \forall \textbf{x} \in \Omega$.
\end{mydef}
%

%----------------------------
\subsection{Optimization Techniques}
\label{ssec:techniques}
%----------------------------

Different optimization problems offer different difficulties to their particular solutions. Such complexity is mainly related to the ruggedness ({\it e.g.}, continuity, differentiability, and smoothness) and dimensionality of the problem ({\it i.e.}, the dimension, and for the finite case, the number of elements of $\Omega$). Depending on these factors, different optimization techniques can be more efficient to solve a particular optimization problem. Generally, traditional optimization techniques can be divided into two groups: deterministic and stochastic optimization. 

The deterministic techniques employ a search engine, where each step is directly and deterministically related to the previous steps~\cite{refdet:floudas2000deterministic}. In summary, deterministic techniques can be gradient-based or enumerative search-based. Gradient-based techniques search for points, where the gradient of cost function is null ($\nabla f(\textbf{x})= 0$), {\it e.g.}, gradient-descent~\cite{snyman2005practical} and Newton's optimization~\cite{deb2004optimization}. Although they are fast and efficient, those techniques are unable to solve non-convex or non-differentiable problems. Enumerative search-based optimization consists in scanning the search-space by enumerating all possible points and comparing cost function with best previous values, {\it e.g.}, dynamic programming, branch and bound~\cite{scholz2011deterministic}, and pattern search~\cite{FINDLER198741}.

Stochastic techniques employ randomness to avoid the local minima and to ensure the global optimization; such techniques are usually based on meta-heuristics~\cite{refstc:marti2005stochastic}. This class of techniques has become very popular in the last decades and has been used in all types of optimization problems. Among those stochastic techniques, simulated annealing~\cite{saref:Laarhoven:1987:SAT:59580}, particle swarm~\cite{pswarmref:olsson2011particle}, and evolutionary algorithms ({\it e.g.}, genetic algorithms~\cite{goldberg1989genetic}) are usually employed in practice. 

Recently, optimization techniques and tools that employ SMT solvers and non-deterministic variables were applied to solve optimization problems~\cite{CDC2016,ABsolver,Nuzzo2010,Gao2013,Bjorner2015,Symba,SebastianiCAV2015,SebastianiTransactions2015,SebastianiTACAS2015}, which searches for the global optima in a search-space that is symbolically defined and uses counterexamples produced by SMT solvers to further constrain the search-space. The global optima is the set of values for the decision variables that makes an optimization proposition satisfiable. The technique presented here is the first optimization method based on SMT solvers and inductive generalization described in the literature, which is able to solve non-convex problems over $\mathbb{R}$.

%----------------------------
\subsection{Model Checking}
\label{ssec:mc}
%----------------------------

Model checking is an automated verification procedure to exhaustively check all (reachable) system's states~\cite{Clarke1999}. 
%it was initially proposed for verifying finite state concurrent systems
%One advantage of model checking is that verification can be performed automatically. 
The model checking procedure typically consists of three steps: modeling, specification, and verification.

Modeling is the first step, where it converts the system to a formalism that is accepted by a verifier. The modeling step usually requires the use of an abstraction to eliminate irrelevant (or less) important system details~\cite{Baier2008}. The second step is the specification, which describes the system's behavior and the property to be checked. An important issue in the specification is the correctness. Model checking provides ways to check whether a given specification satisfies a system's property, but it is difficult to determine whether such specification covers all properties in which the system should satisfy. 

Finally, the verification step checks whether a given property is satisfied w.r.t. a given model, {\it i.e.}, all relevant system states are checked to search for any state that violates the verified property. In case of a property violation, the verifier reports the system's execution trace (counterexample), which contains all steps from the (initial) state to the (bad) state that lead to the property violation. Errors could also occur due to incorrect system modeling or inadequate specification, thus generating false verification results.

%----------------------------
\subsubsection{Bounded Model Checking (BMC)}
\label{ssec:smt-mc}
%----------------------------

%Formal verification of software and systems is an important task to ensure that a designed model meets its specification. As model checking techniques do not require proofs (algorithmic rather than deductive in nature), they have an important place in the verification paradigm, whenever systems are becoming more complex, especially with the growth of cyber-physical systems, which demand short development cycles and high-level of reliability~\cite{leeCPS}.

%The International Competition on Software Verification (SV-COMP) is an annual competition on software verification tools, which evaluates and presents several software testing and verification techniques~\cite{Beyer2016}. 

BMC is an important verification technique, which has presented attractive results over the last years~\cite{Beyer2016}. BMC techniques based on Boolean Satisfiability (SAT)~\cite{handbook09} or Satisfiability Modulo Theories (SMT)~\cite{BarrettSST09} have been successfully applied to verify single- and multi-threaded programs, and also to find subtle bugs in real programs~\cite{esbmc,CBMC}. BMC checks the negation of a given property at a given depth over a transition system $M$. 

\begin{mydef}{\cite{handbook09} -- }
Given a transition system $M$, a property $\phi$, and a bound $k$; BMC unrolls the system $k$ times and translates it into a verification condition (VC) $\psi$, which is satisfiable iff $\phi$ has a counterexample of depth less than or equal to $k$. 
\end{mydef}

In this study, the ESBMC tool~\cite{MorseRCN014} is used as verification engine, as it represents one of the most efficient BMC tools that participated in the last software verification competitions~\cite{Beyer2016}. 
%in particular, ESBMC is one of the most efficient tools to reason about programs that make use of bit-vector arithmetic according to the SV-COMP $2016$ edition. 
%
ESBMC finds property violations such as pointer safety, array bounds, atomicity, overflows, deadlocks, data race, and memory leaks in single- and multi-threaded C/C++ software. It also verifies programs that make use of bit-level, pointers, structures, unions, fixed- and floating-point arithmetic. Inside ESBMC, the associated problem is formulated by constructing the following logical formula
\begin{equation}
	\label{esbmc_logical_formula}
	\psi_k = I(S_{0}) \wedge \bigvee_{i=0}^{k} \bigwedge_{j=0}^{i-1} \gamma (s_{j},s_{j+1})  \wedge \overline{\phi(s_{1})}
\end{equation}
where $\phi$ is a property and $S_{0}$ is a set of initial states of $M$, and $\gamma(s_{j},s_{j+1})$ is the transition relation of $M$ between time steps $j$ and $j+1$. Hence, $I(S_{0}) \wedge \bigwedge_{j=0}^{i-1} \gamma (s_{j},s_{j+1}) $ represents the executions of a transition system $M$ of length $i$. The above VC $\psi$ can be satisfied if and only if, for some $i \leq k$ there exists a reachable state at time step $i$ in which $\phi$ is violated.
If the logical formula (\ref{esbmc_logical_formula}) is satisfiable ({\it i.e.}, returns $true$), then the SMT solver provides a satisfying assignment (counterexample).%
\begin{mydef}
\label{counterexample}
A counterexample for a property $\phi$ is a sequence of states $s_{0}, s_{1},\ldots, s_{k}$ with $s_{0} \in S_{0}$, $s_{k} \in S_k$, and $\gamma\left(s_{i}, s_{i+1}\right)$ for $0 \leq i < k$ that makes (\ref{esbmc_logical_formula}) satisfiable. If it is unsatisfiable (i.e., returns \textit{false}), then one can conclude that there is no error state in $k$ steps or less.
\end{mydef}
In addition to software verification, ESBMC has been applied to ensure correctness of digital filters and controllers~\cite{Abreu2016,Bessa2016,esbmc_controller_SBESC}. Recently, ESBMC has been applied to optimize HW/SW co-design~\cite{TrindadeSBESC2015,TrindadeSBESC2013,TrindadeDAES2016}.

%----------------------------
\section{Verification Model for Counterexample Guided Inductive Optimization}
\label{sec:smtoptmodel}
%----------------------------

%This section describes the present methodology for modeling optimization problems based on SMT-based verification, as explained in Subsection~\ref{ssec:model}. Subsection~\ref{ssec:example} describes an illustrative example of the employed modeling methodology, while Subsection~\ref{ssec:smtopt} presents a general discussion about the proposed SMT-based optimization algorithms. 

%----------------------------
\subsection{Modeling Optimization Problems using a Software Model Checker}
\label{ssec:model}
%----------------------------

There are two important directives in the C/C++ programming language, which can be used for modeling and controlling a verification process: \texttt{ASSUME} and \texttt{ASSERT}. The \texttt{ASSUME} directive can define constraints over (non-deterministic) variables, and the \texttt{ASSERT} directive is used to check system's correctness w.r.t. a given property. Using these two statements, any off-the-shelf C/C++ model checker ({\it e.g.}, CBMC~\cite{CBMC}, CPAChecker~\cite{Beyer2011}, and ESBMC~\cite{MorseRCN014}) can be applied to check specific constraints in optimization problems, as described by Eq.~\eqref{eq:optproblem}. 

Here, the verification process is iteratively repeated to solve an optimization problem using intrinsic functions available in ESBMC ({\it e.g.}, \textit{\_\_ESBMC\_assume} and \textit{\_\_ESBMC\_assert}). We apply incremental BMC to efficiently prune the state-space search based on counterexamples produced by an SMT solver. Note that completeness is not an issue here (cf. Definitions~\ref{def:multivariable} and~\ref{def:global}) since our optimization problems are represented by loop-free programs~\cite{Gadelha2015}.

%----------------------------
\subsection{Illustrative Example}
\label{ssec:example}
%----------------------------

The Ursem03's function is employed to illustrate the present SMT-based optimization method for non-convex optimization problems~\cite{functionslist}. The Ursem03's function is represented by a two-variables function with only one global minimum in $f(x_{1},x_{2})=-3$, and has four regularly spaced local minima positioned in a circumference, with the global minimum in the center. Ursem03's function is defined by Eq.~\eqref{eq:ursem}; Fig.~\ref{fig:ursem} shows its respective graphic.
%\

\begin{equation}
\footnotesize
\label{eq:ursem}
%\begin{array}{cc}
f(x_{1},x_{2})= -\sin\left(2.2\pi x_{1}-\frac{\pi}{2}\right) \frac{(2-|x_1|)(3-|x_1|)}{4} -\sin\left(2.2\pi x_{2}-\frac{\pi}{2}\right) \frac{(2-|x_2|)(3-|x_2|)}{4}
%\end{array}
\end{equation}
\begin{figure}[htb]
\centering
\includegraphics[width=0.9\columnwidth]{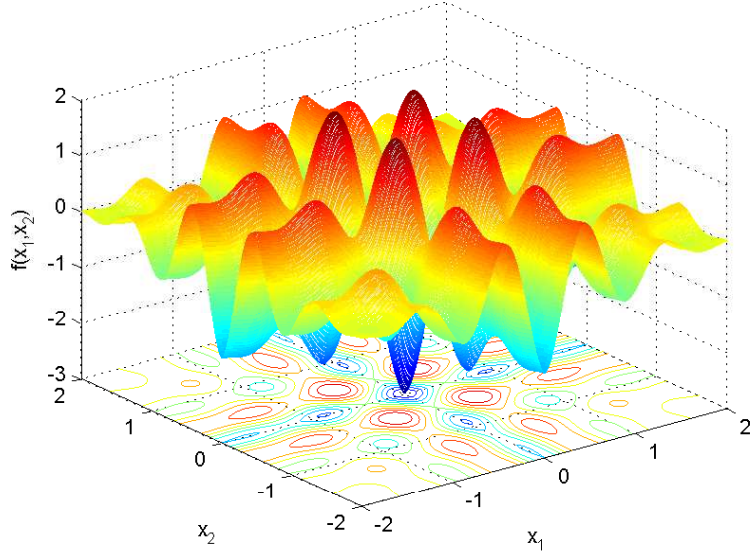}
\caption{Ursem03's function \label{fig:ursem}}
\end{figure}
%

%%%%%%%%%%%%%%%%%%%%%%%%%%%%%%%%%%%%%%
\subsection{Modeling}
%%%%%%%%%%%%%%%%%%%%%%%%%%%%%%%%%%%%%%

The modeling process defines constraints, {\it i.e.}, $\Omega$ boundaries (cf.~Section~\ref{ssec:classification}). This step is important for reducing the state-space search and consequently for avoiding the state-space explosion by the underlying model-checking procedure. Our verification engine is not efficient for unconstrained optimization; fortunately, the verification time can be drastically reduced by means of a suitable constraint choice. Consider the optimization problem given by Eq.~\eqref{eq:ursemoptproblem1}, which is related to the Ursem03's function given in Eq.~\eqref{eq:ursem}:
\begin{equation}
\label{eq:ursemoptproblem1}
\begin{array}{cc}
\min & f(x_{1},x_{2}) \\
\textrm{ s.t. } & x_{1}\geq 0 \\
 & x_{2}\geq 0.
\end{array}
\end{equation}

Note that inequalities $x_{1}\geq 0$ and $x_{2}\geq 0$ are pruning the state-space search to the first quadrant; however, even so it produces a (huge) state-space to be explored since $x_{1}$ and $x_{2}$ can assume values with very high modules. The optimization problem given by Eq.~\eqref{eq:ursemoptproblem1} can be properly rewritten as Eq.~\eqref{eq:ursemoptproblem2} by introducing new constraints. The boundaries are chosen based on Jamil and Yang~\cite{functionslist}, which define the domain in which the optimization algorithms can evaluate the benchmark functions.
\begin{equation}
\label{eq:ursemoptproblem2}
\begin{array}{cc}
\min & f(x_{1},x_{2})  \\
\textrm{ s.t. } & -2\leq x_{1}\leq 2 \\
 &-2 \leq x_{2} \leq 2.
\end{array}
\end{equation}

From the optimization problem definition given by Eq.~\eqref{eq:ursemoptproblem2}, the modeling step can be encoded, where decision variables are declared as non-deterministic variables constrained by the \texttt{ASSUME} directive. In this case,  $-2\leq x_{1}\leq 2$ and $-2 \leq x_{2} \leq 2$.  Fig.~\ref{alg:model} shows the respective C code for modeling Eq.~\eqref{eq:ursemoptproblem2}.
\begin{figure}[ht]
\small
\begin{lstlisting}[xleftmargin=.025\textwidth,xrightmargin=.025\textwidth,frame=single,]
#include "math2.h"
float nondet_float();
int main() {
  //define decision variables
  float x1 = nondet_float();
  float x2 = nondet_float();	
  //constrain the state-space search
  __ESBMC_assume((x1>=-3) && (x1<=3));
  __ESBMC_assume((x2>=-2) && (x2<=2));
  //computing Ursem's function 
  float fobj;
  fobj= -sin(2.2*pi*x1-pi/2)*(2-abs(x1))(3-abs(x1))/4 
  -sin(2.2*pi*x2-pi/2)*(2-abs(x2))(3-abs(x2))/4;
  return 0;
}
\end{lstlisting}
\caption{C Code for the optimization problem given by Eq.~\eqref{eq:ursemoptproblem2}.}
\label{alg:model}
\end{figure}
Note that in Figure~\ref{alg:model}, the decision variables $x_1$ and $x_2$ are declared as floating-point numbers initialized with non-deterministic values; we then constraint the state-space search using assume statements. The objective function of Ursem`s function is then declared as described by Eq.~\ref{eq:ursem}. 

%%%%%%%%%%%%%%%%%%%%%%%%%%%%%%%%%%%%%%
\subsection{Specification}
%%%%%%%%%%%%%%%%%%%%%%%%%%%%%%%%%%%%%%

The next step of the proposed methodology is the specification, where the system behavior and the property to be checked are described. For the Ursem03's function, the result of the specification step is the C program shown in Fig.~\ref{alg:specify}, which is iteratively checked by the underlying verifier. Note that the decision variables are declared as integer type and their initialization depends on a given precision $p$, which is iteratively adjusted once the counterexample is produced by the SMT solver. Indeed, the C program shown in Fig.~\ref{alg:model} leads the verifier to produce a considerably large state-space exploration, if the decision variables are declared as non-deterministic floating-point type. In this study, decision variables are defined as non-deterministic integers, thus discretizing and reducing the state-space exploration; however, this also reduces the optimization process precision.

To trade-off both precision and verification time, and also to maintain convergence to an optimal solution, the underlying model-checking procedure has to be iteratively invoked, in order to increase its precision for each successive execution. An integer variable $p=10^n$ is created and iteratively adjusted, such that $n$ is the amount of decimal places related to the decision variables. Additionally, a new constraint is inserted; in particular, the new value of the objective function $f(\textbf{x}^{(i)})$ at the $i$-th must not be greater than the value obtained in the previous iteration $f(\textbf{x}^{(i-1)})$. Initially, all elements in the state-space search $\Omega$ are candidates for optimal points, and this constraint cutoffs several candidates on each iteration.

In addition, a property has to be specified to ensure convergence to the minimum point on each iteration. This property specification is stated by means of an assertion, which checks whether the literal $l_{optimal}$ given in Eq.~\eqref{eq:optimal_literal} is satisfiable for every optimal candidate $f_{c}$ remaining in the state-space search ({\it i.e.}, traversed from lowest to highest).
\begin{equation}
\label{eq:optimal_literal}
l_{optimal}\iff f(\textbf{x})>f_{c}
\end{equation}

The verification procedure stops when the literal $l_{optimal}$ is not satisfiable, {\it i.e.}, if there is any $\textbf{x}^{(i)}$ for which $f(\textbf{x}^{(i)})\leq f_{c}$; a counterexample shows such $\textbf{x}^{i}$, approaching iteratively $f(\textbf{x})$ from the optimal $\textbf{x}^{*}$. Fig.~\ref{alg:specify} shows the initial specification for the optimization problem given by Eq.~\eqref{eq:ursemoptproblem2}. The initial value of the objective function can be randomly initialized. For the example in Fig.~\ref{alg:specify}, $f(\textbf{x}^{(0)})$ is arbitrarily initialized to $100$, but the present optimization algorithm works for any initial state.

%Note that invariants could be formulated about the function, what allows the reduction of candidate functions to perform the search, {\it e.g.} the invariant $f_{c}>0$ can be formulate for the Himmelblau's function, once it is positive-definite.
%
\begin{figure}[ht]
\small
\begin{lstlisting}[xleftmargin=.025\textwidth,xrightmargin=.025\textwidth,frame=single,]
#include "math2.h"
#define p 1 //precision variable
int nondet_int();
float nondet_float();
int main() {
  float f_i = 100;	//previous objective function value
  int lim_inf_x1 = -2*p;
  int lim_sup_x1 = 2*p;
  int lim_inf_x2 = -2*p;
  int lim_sup_x2 = 2*p;
  int X1 = nondet_int();
  int X2 = nondet_int();
  float x1 = float nondet_float();
  float x2 = float nondet_float();
  __ESBMC_assume( (X1>=lim_inf_x1) && (X1<=lim_sup_x1) );
  __ESBMC_assume( (X2>=lim_inf_x2) && (X2<=lim_sup_x2) );
  __ESBMC_assume( x1 = (float) X1/p );
  __ESBMC_assume( x2 = (float) X2/p );
  float fobj;
  fobj= -sin(2.2*pi*x1-pi/2)*(2-abs(x1))(3-abs(x1))/4 
  -sin(2.2*pi*x2-pi/2)*(2-abs(x2))(3-abs(x2))/4;
  //constrain to exclude fobj>f_i
  __ESBMC_assume( fobj < f_i );	
  assert( fobj > f_i );
  return 0;
}
\end{lstlisting}
\caption{C code after the specification of Eq.~\eqref{eq:ursemoptproblem2}.}
\label{alg:specify}
\end{figure}

%%%%%%%%%%%%%%%%%%%%%%%%%%%%%%%%%%%%%%
\subsection{Verification}
%%%%%%%%%%%%%%%%%%%%%%%%%%%%%%%%%%%%%%

Finally, in the verification step, the C program shown in Fig.~\ref{alg:specify} is checked by the verifier and a counterexample is returned with a set of decision variables $\textbf{x}$, for which the objective function value converges to the optimal value. A specified C program only returns a successful verification result if the previous function value is the optimal point for that specific precision (defined by $p$), {\it i.e.}, $f(\textbf{x}^{(i-1)})=f(\textbf{x}^{*})$. For the example shown in Fig.~\ref{alg:specify}, the verifier shows a counterexample with the following decision variables: $x_{1}=2$ and $x_{2}=0$. These decision variable are used to compute a new minimum candidate, note that $f(2,0)=-1.5$, which is the new minimum candidate solution provided by this verification step. Naturally, it is less than the initial value (100), and this verification can be repeated with the new value of $f(\textbf{x}^{(i-1)})$, in order to obtain an objective function value that is close to the optimal point on each iteration. Note that the data provided by the counterexample is crucial for the algorithm convergence and for the state-space search reduction.

%----------------------------
%\subsection{SMT-based Optimization}
%\label{ssec:smtopt}
%----------------------------

%The SMT-based optimization~\cite{Araujo2016} ensures the global optimization by means of an SMT-based verification procedure. The present model for optimization problems is suitable for the SMT-based optimization algorithm presented in~\cite{Araujo2016}. In particular, Ara\'ujo {\it et al.}~\cite{Araujo2016} show that SMT-based optimization presents better accuracy if compared to traditional optimization techniques ({\it e.g.}, GA and Particle Swarm). Although the optimization time is higher than traditional techniques, this algorithm proves to be efficient and versatile since it can correctly solve a wide variety of optimization problems, including non-convex problems. This paper revisits the algorithm presented in~\cite{Araujo2016} and proposes modifications that are able to significantly reduce the optimization time for some specific class of optimization problems.

%----------------------------
\section{Counterexample Guided Inductive Optimization of Non-convex Functions}
\label{sec:smtoptnonconvex}
%----------------------------

This section presents two variants of the Counterexample Guided Inductive Optimization 
(CEGIO) algorithm for global constrained optimization. A generalized CEGIO algorithm 
is explained in Subsection~\ref{ssec:global}, together with
a convergence proof in Subsection~\ref{ssec:proof}, while Subsection~\ref{ssec:simplified} 
presents a simplified version of that algorithm. 

%----------------------------
\subsection{CEGIO: the Generalized Algorithm (CEGIO-G)}
\label{ssec:global}
%----------------------------

The generalized SMT-based optimization algorithm previously presented by Ara\'ujo {\it et al.}~\cite{Araujo2016} 
is able to find the global optima for any optimization problem that can be modeled with the methodology presented 
in Section~\ref{sec:smtoptmodel}. The execution time of that algorithm depends on how the state-space search is 
restricted and on the number of the solution decimal places. Specifically, the algorithm presents a fixed-point solution with 
adjustable precision, \textit{i.e.}, the number of decimal places can be defined. Naturally, for integer optimal points, 
this algorithm returns the correct solution quickly. However, this algorithm might take longer for achieving the optimal 
solution of unconstrained optimization problems with non-integer solutions since it depends on the required precision. 
Although this algorithm frequently produces a longer execution time than other traditional techniques, 
its error rate is typically lower than other existing methods, once it is based on a complete and sound verification procedure. 
Alg.~\ref{alg:optalg1} shows an improved version of the algorithm presented by Ara\'ujo {\it et al.}~\cite{Araujo2016}; 
this algorithm is denoted here as ''Generalized CEGIO algorithm'' (CEGIO-G).
\begin{algorithm}
\small
\DontPrintSemicolon
\SetKwData{Left}{left}\SetKwData{This}{this}\SetKwData{Up}{up}
\SetKwFunction{Union}{Union}\SetKwFunction{FindCompress}{FindCompress}
\SetKwInOut{Input}{input}\SetKwInOut{Output}{output}
\Input{A cost function $f(\textbf{x})$, the space for constraint set $\Omega$, and a desired precision $\epsilon$}
\Output{The optimal decision variable vector $\textbf{x}^{*}$, and the optimal value of function $f(\textbf{x}^{*})$}
\BlankLine
\emph{Initialize $f(\textbf{x}^{(0)})$ randomly and $i=1$}\;
\emph{Initialize the precision variable with $p=1$}\;
\emph{Declare the auxiliary variables $\textbf{x}$ as non-deterministic integer variables}\;
\While{$p\leq \epsilon$}{
\emph{Define bounds for $\textbf{x}$ with the \texttt{ASSUME} directive}, such that $\textbf{x} \in \Omega^\eta$\;
\emph{Describe a model for $f(\textbf{x})$}\;
\Do{$\neg l_{optimal}$ is satisfiable}{
\emph{Constrain $f(\textbf{x}^{(i)})<f(\textbf{x}^{(i-1)})$ with the \texttt{ASSUME} directive}\;
\emph{Verify the satisfiability of $l_{optimal}$ given by Eq.~\eqref{eq:optimal_literal} with the \texttt{ASSERT} directive}\;
\emph{Update $\textbf{x}^{*}=\textbf{x}^{(i)}$ and $f(\textbf{x}^{*})=f(\textbf{x}^{(i)})$ based on the counterexample}\;
\emph{Do~$i=i+1$}\;
}
\emph{Update the precision variable p}\;}
\emph{$\textbf{x}^{*}=\textbf{x}^{(i-1)}$ and $f(\textbf{x}^{*})=f^{(i-1)}(\textbf{x})$}\;
\Return{$\textbf{x}^{*}$ and $f(\textbf{x}^{*})$}\;
\caption{CEGIO: the generalized algorithm.}
\label{alg:optalg1}
\end{algorithm}

Alg.~\ref{alg:optalg1} repeats the specification and verification steps, described in Section~\ref{sec:smtoptmodel}, until the optimal solution $\textbf{x}^{*}$ is found. The precision of optimal solution defines the desired precision variable $\epsilon$. An unitary value of $\epsilon$ results in integer solutions. Solution with one decimal place is obtained for $\epsilon=10$, two decimal places are achieved for $\epsilon=100$, {\it i.e.}, the number of decimal places $\eta$ for the solution is calculated by means of the equation
\begin{equation}
\label{eq:decimal_places}
\eta=\log{\epsilon}.
\end{equation}

After the variable initialization and declaration (lines 1-3 of Alg.~\ref{alg:optalg1}), 
the search domain $\Omega$ is specified in line 5, which is defined by lower and upper bounds 
of the \textbf{x} variable, and in line 6, the model for function, $f(x)$, is defined. 
The specification step (line 8) is executed for each iteration until the desired precision is achieved. 
In this specific step, the search-space is remodelled for the $i$-th precision and it employs previous 
results of the optimization process, \textit{i.e.}, $f(\textbf{x}^{(i-1)})$. 
The verification step is performed in lines 9-10, where the candidate function $f_c$, \textit{i.e.}, 
$f(\textbf{x}^{(i-1)})$ is analyzed by means of the satisfiability check of $\neg l_{optimal}$. 
If there is a $f(\textbf{x})\leq f_{c}$ that violates the \texttt{ASSERT} directive, then the candidate 
function is updated and the algorithm returns to the specification step (line 8) to remodel the state-space again. 
If the \texttt{ASSERT} directive is not violated, the last candidate $f_{c}$ is the minimum value with the 
precision variable $p$ (initially equal to 1), thus $p$ is multiplied by $10$, adding a decimal place to the optimization solution, 
and the outer loop (\texttt{while}) is repeated.

Note that Alg.~\ref{alg:optalg1} contains two nested loops, the outer (\texttt{while}) loop is related to the 
desired precision and the inner (\texttt{do-while}) loop is related to the specification and verification steps. 
This configuration speeds-up the optimization problem due to the complexity reduction if compared to the algorithm 
originally presented in Ara\'{u}jo {\it et al.}~\cite{Araujo2016}. The generalized CEGIO algorithm uses the manipulation 
of fixed-point number precision to ensure the optimization convergence.

%----------------------------
\subsection{Proof of Convergence}
\label{ssec:proof}
%----------------------------

A generic optimization problem described in the previous section is formalized as follow: given a set $\Omega \subset \mathbb{R}^n$, determine $\textbf{x}^{*} \in \Omega$, such that, $f(\textbf{x}^{*})\in\Phi$ is the lowest value of the function $f$, \textit{i.e.}, $\min f(\textbf{x})$, where $\Phi\subset\mathbb{R}$ is the image set of $f$ ({\it i.e.}, $\Phi=Im(f)$). Our approach solves the optimization problem with $\eta$ decimal places, {\it i.e.}, the solution $\textbf{x}^{*}$ is an element of the rational domain $\Omega^\eta \subset \Omega$ such that 
$\Omega^\eta=\Omega \cap \Theta$, where $\Theta=\{ \textbf{x} \in \mathbb{Q}^n \vert \textbf{x} = k\times 10^{-\eta}, \forall k \in \mathbb{Z} \}$, {\it i.e.}, $\Omega^\eta$ is composed by rationals with $\eta$ decimal places in $\Omega$ ({\it e.g.}, $\Omega^0 \subset \mathbb{Z}^n$). Thus, $\textbf{x}^{*,\eta}$ is the minima of function $f$ in $\Omega^\eta$. 

\begin{mylemma}
\label{lemma:smt}
Let $\Phi$ be a finite set composed by all values $f(\textbf{x})<f_{c}$, where $f_{c}\in\Phi$ is any minimum candidate 
and $\textbf{x}\in\Omega$. The literal $\neg l_{optimal}$ (Eq.~\ref{eq:optimal_literal}) is UNSAT {\it iff} $f_{c}$ holds 
the lowest values in $\Phi$; otherwise, $\neg l_{optimal}$ is SAT {\it iff} there exists any $\textbf{x}_i \in \Omega$ 
such that $f(\textbf{x}_i)<f_{c}$.
\end{mylemma}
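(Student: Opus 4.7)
The plan is to treat this lemma as a direct consequence of the SMT semantics of the literal $l_{optimal}$ combined with the \texttt{ASSUME}-constrained state-space described in Section~\ref{sec:smtoptmodel}. First, I would unfold the definition in Eq.~\eqref{eq:optimal_literal}: $l_{optimal}\iff f(\textbf{x}) > f_c$, so $\neg l_{optimal} \iff f(\textbf{x}) \leq f_c$. By Definition~\ref{counterexample}, asking whether $\neg l_{optimal}$ is satisfiable (equivalently, whether the \texttt{ASSERT} on $l_{optimal}$ admits a counterexample) is the same as asking whether the verifier can produce an assignment $\textbf{x}_i$ lying in the constrained region (\textit{i.e.}, $\textbf{x}_i \in \Omega$ together with the \texttt{ASSUME} clause $f(\textbf{x}) < f_c$ inserted in line~8 of Alg.~\ref{alg:optalg1}) that simultaneously falsifies $l_{optimal}$. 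These two constraints jointly reduce to $f(\textbf{x}_i) < f_c$ with $\textbf{x}_i \in \Omega$, which will be the key equivalence underlying both biconditionals.

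Next I would prove the first biconditional in contrapositive form. For the ``if'' direction, I assume $f_c$ is the lowest value in $\Phi$; by the definition of $\Phi$ as the collection of all values $f(\textbf{x}) < f_c$ for $\textbf{x} \in \Omega$ (together with $f_c$ itself as the current candidate), no assignment $\textbf{x}_i \in \Omega$ can satisfy $f(\textbf{x}_i) < f_c$, so by the equivalence established above $\neg l_{optimal}$ has no model and is UNSAT. For the ``only if'' direction, I assume $f_c$ is not the lowest value in $\Phi$; then by construction of $\Phi$ there exists at least one element $f(\textbf{x}_i) < f_c$ with $\textbf{x}_i \in \Omega$, and this $\textbf{x}_i$ is precisely the satisfying assignment the SMT solver can return, so $\neg l_{optimal}$ is SAT. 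The second biconditional then follows either directly from the same equivalence (SAT of $\neg l_{optimal}$ means a witness $f(\textbf{x}_i) < f_c$ exists) or as the logical contrapositive of the first, since these two statements partition the possible outcomes of the SMT query on $\Phi$.

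The main obstacle is not mathematical depth but notational precision: one must be careful with the interplay between the non-strict inequality $f(\textbf{x}) \leq f_c$ appearing in $\neg l_{optimal}$, the strict inequality $f(\textbf{x}) < f_c$ injected via the \texttt{ASSUME} directive, and the definition of $\Phi$ that includes $f_c$ as a candidate while otherwise containing only strictly smaller values. Handling the corner case where $f(\textbf{x}_i) = f_c$ requires noting that the \texttt{ASSUME} clause excludes it from being a counterexample, so only strict decreases can witness satisfiability; this is exactly what is needed to align the SMT verdict with the ``lowest value in $\Phi$'' criterion. Once these semantic details are reconciled, the proof reduces to a direct reading of Definition~\ref{counterexample} applied to the formula emitted by the specification step.
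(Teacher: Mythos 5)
Your proposal is correct, and it follows the only natural route: unfolding Eq.~\eqref{eq:optimal_literal} so that satisfiability of $\neg l_{optimal}$ under the \texttt{ASSUME} constraints is equivalent to the existence of some $\textbf{x}_i \in \Omega$ with $f(\textbf{x}_i) < f_c$, which is exactly the dichotomy the lemma asserts. Note that the paper itself states Lemma~\ref{lemma:smt} without any explicit proof and simply invokes it inside the proof of Theorem~\ref{theo:minimum}, so your write-up is an elaboration of the argument the authors treat as immediate; your explicit reconciliation of the non-strict inequality in $\neg l_{optimal}$ with the strict \texttt{ASSUME} clause $f(\textbf{x}) < f_c$ (ruling out $f(\textbf{x}_i) = f_c$ as a witness) is a point the paper glosses over, and it is handled correctly.
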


\begin{mytheorem}
\label{theo:minimum}
Let $\Phi_{i}$ be the $i$-th image set of the optimization problem constrained by $\Phi_{i}=\{f(\textbf{x})<f_{c}^{i}\}$, where $f_{c}^{i}=f(\textbf{x}^{(i-1)}),\forall i>0$, and $\Phi_{0}=\Phi$. There exists an $i^{*}>0$, such that $\Phi_{i^{*}}=\emptyset$, and $f(\textbf{x}^{*})=f_{c}^{i^{*}}$.
\end{mytheorem}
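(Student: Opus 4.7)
The plan is to exploit the finiteness of the discretized search domain $\Omega^\eta$ to guarantee termination, and then invoke Lemma~\ref{lemma:smt} to identify the terminal candidate with the global optimum.

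First I would observe that, because the \texttt{ASSUME} directives confine $\textbf{x}$ to a bounded box of $\mathbb{R}^n$ and the grid $\Theta$ has fixed spacing $10^{-\eta}$, the set $\Omega^\eta = \Omega \cap \Theta$ is finite. Consequently, the image set $\Phi = f(\Omega^\eta)$ is a finite subset of $\mathbb{R}$, and so is every constrained image set $\Phi_i \subseteq \Phi$. This finiteness is the key fact from which everything else follows.

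Next I would establish strict monotonicity of the sequence $(f_c^i)_{i \geq 1}$. By construction of Alg.~\ref{alg:optalg1}, whenever $\Phi_i \neq \emptyset$ the literal $\neg l_{optimal}$ is SAT (by Lemma~\ref{lemma:smt}), and the counterexample returned by the SMT solver yields a new witness $\textbf{x}^{(i)} \in \Omega^\eta$ with $f(\textbf{x}^{(i)}) < f_c^i$. Setting $f_c^{i+1} = f(\textbf{x}^{(i)})$ then gives $\Phi_{i+1} = \{f(\textbf{x}) < f_c^{i+1}\} \subsetneq \Phi_i$, since $f_c^{i+1}$ itself lies in $\Phi_i$ but not in $\Phi_{i+1}$. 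Hence $|\Phi_{i+1}| \leq |\Phi_i| - 1$.

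From Step~1 and Step~2 termination is immediate: the cardinalities $|\Phi_i|$ form a strictly decreasing sequence of non-negative integers starting from $|\Phi| < \infty$, so there exists a smallest index $i^* \leq |\Phi|$ with $\Phi_{i^*} = \emptyset$. Finally, $\Phi_{i^*} = \emptyset$ means that no $\textbf{x} \in \Omega^\eta$ satisfies $f(\textbf{x}) < f_c^{i^*}$; by the SAT/UNSAT characterization in Lemma~\ref{lemma:smt} the literal $\neg l_{optimal}$ is UNSAT at iteration $i^*$, and by Definition~\ref{def:global} the value $f_c^{i^*}$ is the global minimum of $f$ over $\Omega^\eta$, i.e.\ $f(\textbf{x}^*) = f_c^{i^*}$, as required. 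The main obstacle in writing this rigorously is simply being careful that the iteration is well-defined (the counterexample witness always lies in $\Omega^\eta$ thanks to the \texttt{ASSUME} constraints) and that the strict inequality in the update of $f_c^i$ is what forces the cardinality to drop by at least one at every step; once these points are pinned down, the proof is essentially a monotone-convergence argument on a finite lattice.
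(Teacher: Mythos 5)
Your proposal is correct and follows essentially the same route as the paper's proof: the counterexample at each SAT iteration forces $f_c^{i+1}=f(\textbf{x}^{(i)})<f_c^{i}$, so the finite image sets $\Phi_i$ strictly shrink until some $\Phi_{i^*}=\emptyset$, at which point Lemma~\ref{lemma:smt} gives UNSAT and identifies $f_c^{i^*}$ with the global minimum. If anything, your cardinality bookkeeping ($\lvert\Phi_{i+1}\rvert\leq\lvert\Phi_i\rvert-1$, with $f_c^{i+1}\in\Phi_i\setminus\Phi_{i+1}$) and the explicit justification that $\Omega^\eta=\Omega\cap\Theta$ is finite are stated more carefully than in the paper, which phrases the decreasing-cardinality step with the inequality inverted.
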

\begin{proof}
 Initially, the minimum candidate $f_c^{0}$ is chosen randomly from $\Phi_{0}$. 
 Considering Lemma \ref{lemma:smt}, if $\neg l_{optimal}$ is SAT, any $f(\textbf{x}^{0})$ 
 (from the counterexample) is adopted as next candidate solution ({\it i.e.}, $f_{c}^{1}=f(\textbf{x}^{(0)})$, 
 and every element from $\Phi_{1}$ is less than $f_{c}^1$. Similarly in the next iterations, 
 while $\neg l_{optimal}$ is SAT, $f_{c}^{i}=f(\textbf{x}^{(i-1)})$, and every element from $\Phi_{i}$ is less than 
 $f_c^{i}$, consequently, the number of elements of $\Phi_{i-1}$ is always less than that 
 of $\Phi_{i}$. Since $\Phi_{0}$ is finite, in the $i^{*}$-th iteration, $\Phi_{i^{*}}$ will be empty and 
 the $\neg l_{optimal}$ is UNSAT, which leads to (Lemma~\ref{lemma:smt}) $f(\textbf{x}^{*})=f_{c}^{i*}$. 
\end{proof}

Theorem~\ref{theo:minimum} provides sufficient conditions for the global minimization over a finite set; 
it solves the optimization problem defined at the beginning of this section, {\it iff} the search domain 
$\Omega^{\eta}$ is finite. It is indeed finite, once it is defined as an intersection between a bounded set 
($\Omega$) and a discrete set ($\Theta$). Thus, the CEGIO-G algorithm will always provide the minimum 
$\textbf{x}^{*}$ with $\eta$ decimal places ({\it i.e.}, $\textbf{x}^{*,\eta}$).

%%%%%%%%%%%%%%%%%%%%%%%%%%%%%%%%%%%%%%
\subsubsection{Avoiding the Local Minima}
\label{ssec:local}
%%%%%%%%%%%%%%%%%%%%%%%%%%%%%%%%%%%%%%

As previously mentioned, an important feature of this proposed CEGIO method is always to find the global minimum (cf. Theorem \ref{theo:minimum}). Many optimization algorithms might be trapped by local minima and they might incorrectly solve optimization problems. However, the present technique ensures the avoidance of those local minima, through the satisfiability checking, which is performed by successive SMT queries. This property is maintained for any class of functions and for any initial state.

Figures~\ref{fig:ursem03_plan1} and~\ref{fig:ursem03_plan2} show the aforementioned property of this algorithm, comparing its performance to the genetic algorithm. In those figures, Ursem03's function is adapted for a single-variable problem over $x_{1}$, {\it i.e.}, $x_{2}$ is considered fixed and equals to $0.0$, and the respective function is reduced to a plane crossing the global optimum in $x_{1}= -3$. The partial results after each iteration are illustrated by the various marks in these graphs. Note that the present method does not present continuous trajectory from the initial point to the optimal point; however, it always achieves the correct solution. Fig.~\ref{fig:ursem03_plan1} shows that both techniques (GA and SMT) achieve the global optimum. However, Fig.~\ref{fig:ursem03_plan2} shows that GA might be trapped by the local minimum for a different initial point. In contrast, the proposed CEGIO method can be initialized further away from the global minimum and as a result it can find the global minimum after some iterations, as shown in Figures~\ref{fig:ursem03_plan1} and~\ref{fig:ursem03_plan2}.
\begin{figure}[ht]
\centering
\includegraphics[width=\columnwidth]{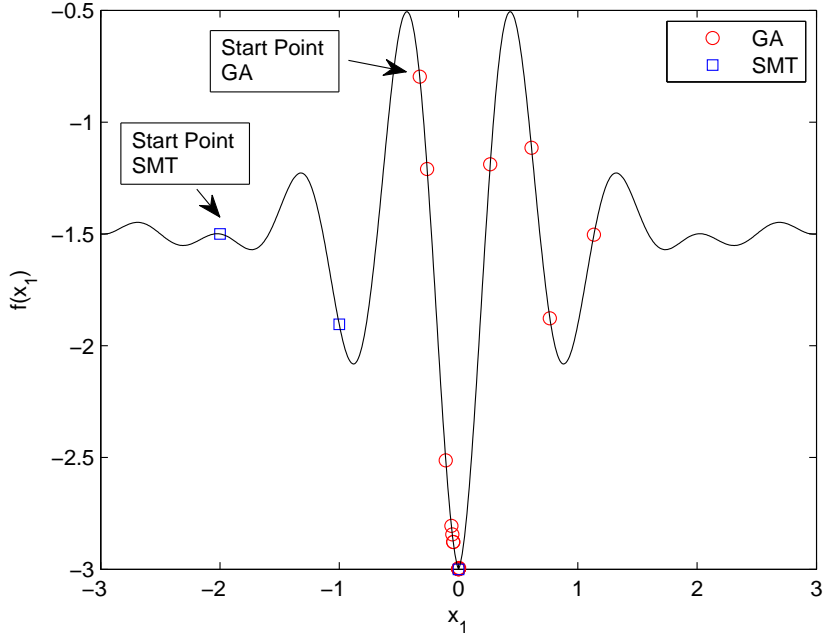}
\caption{Optimization trajectory of GA and SMT for a Ursem03's plane in $x_{2}=0$. Both methods obtain the correct answer.}
\label{fig:ursem03_plan1}
\end{figure}

\begin{figure}[ht]
\centering
\includegraphics[width=\columnwidth]{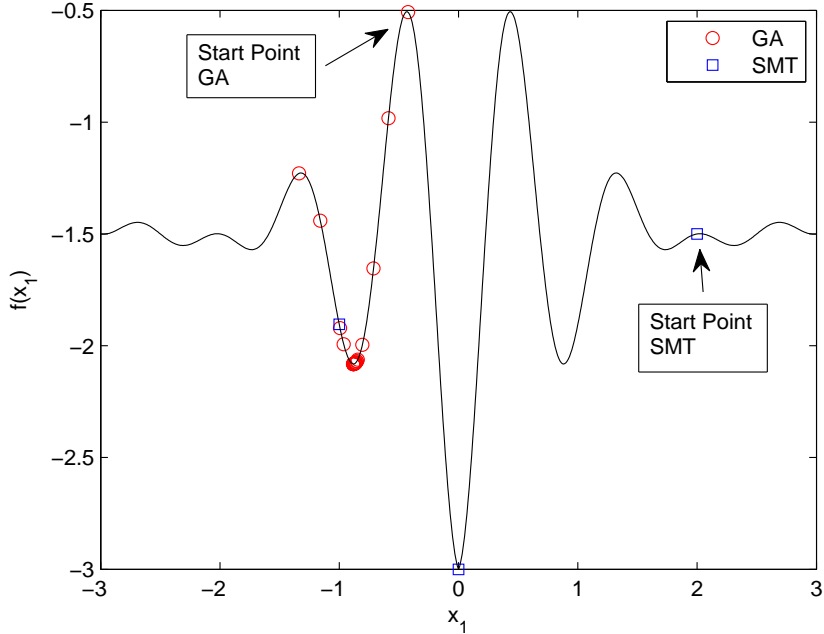}
\caption{Optimization trajectory of GA and SMT for a Ursem03's plane in $x_{2}=0$. GA is trapped by an local minimum, but SMT obtains the correct answer.}
\label{fig:ursem03_plan2}
\end{figure}

%----------------------------
\subsection{A Simplified Algorithm for CEGIO (CEGIO-S)}
\label{ssec:simplified}
%----------------------------

Alg.~\ref{alg:optalg1} is suitable for any class of functions, but there are some particular functions that contain further knowledge about their behaviour ({e.g.}, positive semi-definite functions such as $f(\textbf{x}) \geq 0$). Using that knowledge, Alg.~\ref{alg:optalg1} is slightly modified for handling this particular class of functions. This algorithm is named here as ``Simplified CEGIO algorithm'' (CEGIO-S) and it is presented in Alg.~\ref{alg:optalg2}.

Note that Alg. \ref{alg:optalg2} contains three nested loops after the variable initialization and declaration (lines 1-4), which is similar to the algorithm presented in~\cite{Araujo2016}. In each execution of the outer loop \texttt{while} (lines 5-25), the bounds and precision are updated accordingly. The main difference in this algorithm w.r.t the Alg.~\ref{alg:optalg1} is the presence of the condition in line $9$, \textit{i.e.}, it is not necessary to generate new checks if that condition does not hold, since the solution is already at the minimum limit, {\it i.e.}, $f(\textbf{x}^*)=0$.

Furthermore, there is another inner loop \texttt{while} (lines 12-15), which is responsible for generating multiple VCs through the \texttt{ASSERT} directive, using the interval between $f_m$ and $f(\textbf{x}^{(i-1)})$. Note that this loop generates $\alpha+1$ VCs through the step defined by $\delta$ in line 8. 

\begin{algorithm}
\small
\DontPrintSemicolon
\SetKwData{Left}{left}\SetKwData{This}{this}\SetKwData{Up}{up}
\SetKwFunction{Union}{Union}\SetKwFunction{FindCompress}{FindCompress}
\SetKwInOut{Input}{input}\SetKwInOut{Output}{output}
\Input{A cost function $f(\textbf{x})$, the space for constraint set $\Omega$, a desired precision $\epsilon$, and a learning rate $\alpha$}
\Output{The optimal decision variable vector $\textbf{x}^{*}$, and the optimal value of function $f(\textbf{x}^{*})$}
\BlankLine
\emph{Initialize $f_{m}=0$}\;
\emph{Initialize $f(\textbf{x}^{(0)})$ randomly and $i=1$}\;
\emph{Initialize the precision variable with $p=1$}\;
\emph{Declare the auxiliary variables $\textbf{x}$ as non-deterministic integer variables}\;
\While{$p\leq \epsilon$}{
\emph{Define bounds for $\textbf{x}$ with the \texttt{ASSUME} directive}, such that $\textbf{x} \in \Omega^\eta$\;
\emph{Describe a model for $f(\textbf{x})$}\;
\emph{Declare $\delta=(f(\textbf{x}^{(i-1)})-f_{m})/\alpha$}\;
\If{$(f(\textbf{x}^{(i-1)})-f_{m} > 0.00001)$}{
\Do{$\neg l_{optimal}$ is satisfiable}{
\emph{Constraint $f(\textbf{x}^{(i)})<f(\textbf{x}^{(i-1)})$ with the \texttt{ASSUME} directive}\;
\While{$(f_{m} \leq f(\textbf{x}^{(i-1)})$}{
\emph{Verify the satisfiability of $l_{optimal}$ given by Eq.~\eqref{eq:optimal_literal} with the \texttt{ASSERT} directive}\;
\emph{Do~$f_{m}=f_{m}+\delta$}\;
}
\emph{Update $\textbf{x}^{*}=\textbf{x}^{(i)}$ and $f(\textbf{x}^{*})=f(\textbf{x}^{(i)})$ based on the counterexample}\;
\emph{Do~$i=i+1$}\;
}}
\Else{break}
\emph{Update the precision variable $p$}\;}
\emph{$\textbf{x}^{*}=\textbf{x}^{(i-1)}$ and $f(\textbf{x}^{*})=f^{(i-1)}(\textbf{x})$}\;
\Return{$\textbf{x}^{*}$ and $f(\textbf{x}^{*})$}\;
\caption{CEGIO: a simplified algorithm.}
\label{alg:optalg2}
\end{algorithm}

These modifications allow Alg.~\ref{alg:optalg2} to converge faster than Alg.~\ref{alg:optalg1} for the positive semi-definite functions, since the chance of a check failure is higher due to the larger number of properties. However, if $\alpha$ represents a large number, then the respective algorithm would produce many VCs, which could cause the opposite effect and even lead the verification process to exhaust the memory. 

%----------------------------
\section{Counterexample Guided Inductive Optimization of Convex Problems}
\label{sec:smtoptconvex}
%----------------------------

This section presents the fast CEGIO algorithm for convex optimization problems. 
Subsection \ref{ssec:convex} presents the convex optimization problems, while the fast SMT algorithm is explained 
in Subsection~\ref{ssec:fast}. Additionally, a convergence proof of the CEGIO 
convex problem is described in Subsection~\ref{ssec:proofconvex}.

%----------------------------
\subsection{Convex Optimization Problems}
\label{ssec:convex}
%----------------------------

Convex functions are an important class of functions commonly found in many areas of mathematics, 
physics, and engineering~\cite{li2015selected}. A convex optimization problem is similar to Eq. \eqref{eq:optproblem}, 
where $f(\textbf{x})$ is a convex function, which satisfies Eq. \eqref{eq:convex} as
\begin{equation}
\label{eq:convex}
f(\alpha x_{1} + \beta x_{2}) \leq \alpha f(x_1) + \beta f(x_2)
\end{equation}
for all $x_i \in \mathbb{R}^{n}$, with $i=1,2$ and all $\alpha$, $\beta \in \mathbb{R}$ 
with $\alpha+\beta=1$, $\alpha \geq 0$, $\beta \geq 0$.

Theorem~\ref{theo:convex} is an important theorem for convex optimization, which is
used by most convex optimization algorithms.
\begin{mytheorem}
\label{theo:convex}
A local minimum of a convex function $f$, on a convex subset, is always a global minimum of $f$ \cite{Boyd:2004:CO:993483}.
\end{mytheorem}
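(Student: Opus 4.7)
The plan is to prove Theorem~\ref{theo:convex} by contradiction, using the convexity inequality stated in Eq.~\eqref{eq:convex} together with convexity of the feasible set. First I would fix notation: let $\Omega \subset \mathbb{R}^n$ be a convex subset, let $f:\Omega \rightarrow \mathbb{R}$ be a convex function satisfying Eq.~\eqref{eq:convex}, and let $\textbf{x}^{*} \in \Omega$ be a local minimum, meaning there exists a radius $r>0$ such that $f(\textbf{x}^{*}) \leq f(\textbf{x})$ for all $\textbf{x} \in \Omega$ with $\|\textbf{x} - \textbf{x}^{*}\| < r$. The goal is to show $f(\textbf{x}^{*}) \leq f(\textbf{y})$ for every $\textbf{y} \in \Omega$.

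Next I would assume, for contradiction, that $\textbf{x}^{*}$ is not a global minimum: there exists $\textbf{y} \in \Omega$ with $f(\textbf{y}) < f(\textbf{x}^{*})$. Since $\Omega$ is convex, the segment $\textbf{z}(\alpha) = \alpha \textbf{y} + (1-\alpha)\textbf{x}^{*}$ lies entirely in $\Omega$ for every $\alpha \in [0,1]$. Applying the convexity inequality from Eq.~\eqref{eq:convex} with $\beta = 1-\alpha$, I obtain
\begin{equation*}
f(\textbf{z}(\alpha)) \leq \alpha f(\textbf{y}) + (1-\alpha) f(\textbf{x}^{*}) < \alpha f(\textbf{x}^{*}) + (1-\alpha) f(\textbf{x}^{*}) = f(\textbf{x}^{*}),
\end{equation*}
where the strict inequality follows from $f(\textbf{y}) < f(\textbf{x}^{*})$ and $\alpha > 0$.

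Finally I would choose $\alpha$ small enough to land inside the local-minimum neighborhood. Specifically, pick any $\alpha \in (0,1)$ with $\alpha < r/\|\textbf{y} - \textbf{x}^{*}\|$ (treating the case $\textbf{y} = \textbf{x}^{*}$ separately, which is immediately excluded since it would contradict $f(\textbf{y}) < f(\textbf{x}^{*})$). Then $\|\textbf{z}(\alpha) - \textbf{x}^{*}\| = \alpha \|\textbf{y} - \textbf{x}^{*}\| < r$, so $\textbf{z}(\alpha)$ lies in the neighborhood where $\textbf{x}^{*}$ is a local minimum, yet $f(\textbf{z}(\alpha)) < f(\textbf{x}^{*})$, a contradiction. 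Therefore $\textbf{x}^{*}$ must be a global minimum.

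The argument is essentially classical and contains no serious obstacle; the only point requiring care is the bookkeeping on $\alpha$ to ensure the segment point falls strictly inside the local-optimality neighborhood, which is purely a choice of a sufficiently small convex combination coefficient and follows directly from the continuity afforded by the segment parametrization.
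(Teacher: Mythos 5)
Your argument is correct and complete: the contradiction setup, the strict inequality obtained from Eq.~\eqref{eq:convex} for $\alpha \in (0,1)$, and the choice $\alpha < r/\|\textbf{y}-\textbf{x}^{*}\|$ to place $\textbf{z}(\alpha)$ inside the local-optimality neighborhood are exactly the pieces needed, and you handle the degenerate case $\textbf{y}=\textbf{x}^{*}$ properly. Note, however, that the paper does not actually prove Theorem~\ref{theo:convex} at all; it states the result and delegates to the cited textbook of Boyd and Vandenberghe, since the theorem is only used as an ingredient in the convergence argument for the fast CEGIO algorithm (Lemma~\ref{lemma:convex}). Your proof is the standard one found in that reference, so what you have done differently is supply the self-contained classical argument where the paper relies on a citation; this buys the paper nothing new mathematically, but it does make the convergence section self-contained, and your only point of care worth keeping explicit is that the local-minimum radius $r$ is measured within $\Omega$, so the segment point must lie in $\Omega$ as well, which your use of convexity of $\Omega$ already guarantees.
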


Here, Theorem~\ref{theo:convex} is used to ensure 
convergence of the CEGIO convex optimization algorithm presented in Subsection~\ref{ssec:fast}.

%----------------------------
\subsection{Fast CEGIO (CEGIO-F)}
\label{ssec:fast}
%----------------------------

Alg.~\ref{alg:optalg1} aforementioned evolves by increasing the precision of the decision variables, 
\textit{i.e.}, in the first execution of its \texttt{while} loop, the obtained global minimum is integer
since $p=1$, called $\textbf{x}^{*,0}$. Alg.~\ref{alg:optalg3} is an improved algorithm of that Alg.~\ref{alg:optalg1} 
for application in convex functions. It will be denoted here as fast CEGIO algorithm.
Note that, the only difference of Alg.~\ref{alg:optalg1} is the insertion of line $13$, which updates $\Omega^k$ 
before of $p$. For each execution of the \texttt{while} loop, the solution is optimal for precision $p$.
A new search domain  $\Omega^{k} \subset \Omega^{\eta}$ is obtained 
from a CEGIO process over $\Omega^{k-1}$, defining $\Omega^{k}$ as follows:
$\Omega^{k}=\Omega^{\eta} \cap [x^{*,k-1}-p, x^{*,k-1}+p]$, where $x^{*,k-1}$ is the solution with $k-1$ decimal places.
\begin{algorithm}
\small
\DontPrintSemicolon
\SetKwData{Left}{left}\SetKwData{This}{this}\SetKwData{Up}{up}
\SetKwFunction{Union}{Union}\SetKwFunction{FindCompress}{FindCompress}
\SetKwInOut{Input}{input}\SetKwInOut{Output}{output}
\Input{A cost function $f(\textbf{x})$, the space for constraint set $\Omega$, and a desired precision $\epsilon$}
\Output{The optimal decision variable vector $\textbf{x}^{*}$, and the optimal value of function $f(\textbf{x}^{*})$}
\BlankLine
\emph{Initialize $f(\textbf{x}^{(0)})$ randomly and $i=1$}\;
\emph{Initialize the precision variable with $p=1$}\;
\emph{Declare the auxiliary variables $\textbf{x}$ as non-deterministic integer variables}\;
\While{$p\leq \epsilon$}{
\emph{Define bounds for $\textbf{x}$ with the \texttt{ASSUME} directive}, such that $\textbf{x} \in \Omega^k$\;
\emph{Describe a model for $f(\textbf{x})$}\;
\Do{$\neg l_{optimal}$ is satisfiable}{
\emph{Constrain $f(\textbf{x}^{(i)})<f(\textbf{x}^{(i-1)})$ with the \texttt{ASSUME} directive}\;
\emph{Verify the satisfiability of $l_{optimal}$ given by Eq.~\eqref{eq:optimal_literal} with the \texttt{ASSERT} directive}\;
\emph{Update $\textbf{x}^{*}=\textbf{x}^{(i)}$ and $f(\textbf{x}^{*})=f(\textbf{x}^{(i)})$ based on the counterexample}\;
\emph{Do~$i=i+1$}\;
}
\emph{Update set $\Omega^k$}\;
\emph{Update the precision variable p}\;}
\emph{$\textbf{x}^{*}=\textbf{x}^{(i-1)}$ and $f(\textbf{x}^{*})=f^{(i-1)}(\textbf{x})$}\;
\Return{$\textbf{x}^{*}$ and $f(\textbf{x}^{*})$}\;
\caption{Fast CEGIO.}
\label{alg:optalg3}
\end{algorithm}

%----------------------------
\subsection{Proof of Convergence for the Fast CEGIO Algorithm}
\label{ssec:proofconvex}
%----------------------------

The fast CEGIO algorithm computes iteratively for every 
$\Omega^{k},  0 \geq k \leq \eta$. Theorem~\ref{theo:minimum} ensures the 
global minimization for any finite $\Omega^{k}$. The global convergence of the 
fast CEGIO algorithm is ensured {\it iff} the minima of any 
$\Omega^{k-1}$ is inside $\Omega^{k}$. It holds for the generalized algorithm 
since $\Omega^{1}\subset\Omega^{2}...\subset\Omega^{k-1}\subset\Omega^{k}$. 
However, the fast CEGIO algorithm modifies $\Omega^{k}$ boundaries 
using the $k-1$-th solution.
\begin{mylemma}
\label{lemma:convex}
Let $f:\Omega^{k}\rightarrow\mathbb{R}$ be a convex function, as $\Omega^k$ is a finite set, 
Theorem ~\ref{theo:minimum} ensures that the minimum, $x^{*,k}$ in $\Omega^k$ is a local minimum for precision $p$, 
where $k=\log p$. In addition, as $f$ is a convex function, any element $x \in \Omega^{k+1}$ outside $[x^{*,k}-p, x^{*,k}+p]$ 
has its image $f(x)>f(x^{*,k})$ ensured by Eq.~\eqref{eq:convex}.
\end{mylemma}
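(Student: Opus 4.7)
The plan is to split Lemma~\ref{lemma:convex} into its two sub-claims and dispatch them separately.

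First, I would verify the claim that $x^{*,k}$ is a local minimum at precision $p$. The set $\Omega^k = \Omega \cap \Theta$ is finite, being the intersection of the bounded domain $\Omega$ with the uniform lattice $\Theta$ of rationals with $k$ decimal places. Finiteness is the only hypothesis needed to invoke Theorem~\ref{theo:minimum}, which guarantees that the inner CEGIO loop terminates with $x^{*,k}$ as the global minimum of $f$ over $\Omega^k$. Globality on the grid at spacing $10^{-k}$ trivially implies locality at the same precision inside $\Omega^\eta$, delivering the first assertion.

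For the second claim, fix any $x \in \Omega^{k+1}$ lying outside the box $[x^{*,k}-p,\, x^{*,k}+p]$ and argue geometrically. The segment joining $x^{*,k}$ and $x$ must pierce the boundary of the box at a unique point $y$, and I can write $y = \lambda x^{*,k} + (1-\lambda)x$ with $\lambda \in (0,1)$, exhibiting $y$ as a convex combination. By the choice of box radius, $y$ lies on the previous grid $\Omega^k$, so minimality of $x^{*,k}$ yields $f(y) \geq f(x^{*,k})$. Convexity via Eq.~\eqref{eq:convex} supplies $f(y) \leq \lambda f(x^{*,k}) + (1-\lambda)f(x)$. Chaining these bounds and dividing by $1-\lambda > 0$ gives $f(x) \geq f(x^{*,k})$, which is essentially the assertion.

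The principal difficulty will be the strict inequality $f(x) > f(x^{*,k})$ stated in the lemma. Eq.~\eqref{eq:convex} is a non-strict convexity bound, so the chain above only delivers $\geq$. Two routes can close the gap: either strengthen the hypothesis to strict convexity, or observe that any $x$ outside the box with $f(x) = f(x^{*,k})$ would itself be a global optimum at precision $p$, so pruning it from $\Omega^{k+1}$ costs nothing in correctness for the fast CEGIO algorithm. I would present the argument in the $\geq$ form and then explicitly dismiss the equality case as harmless. A secondary minor subtlety is reconciling the scale of $p$ as it appears in $[x^{*,k}-p,\, x^{*,k}+p]$ with its role as the integer scaling factor in Alg.~\ref{alg:optalg3}; I would interpret the $p$ inside the box as the previous grid spacing, so that the crossing point $y$ lands exactly on $\Omega^k$ and the minimality argument applies as stated.
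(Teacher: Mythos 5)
The first half of your argument (finiteness of $\Omega^k$ plus Theorem~\ref{theo:minimum}, hence $x^{*,k}$ is the minimum over the current grid) is fine and is exactly what the paper invokes. The genuine gap is in the second half, at the step ``by the choice of box radius, $y$ lies on the previous grid $\Omega^k$.'' That is true only for a single decision variable. For $n\ge 2$ --- and every benchmark in the paper has two decision variables --- the box $[x^{*,k}-p,\,x^{*,k}+p]$ is a hypercube, and the point where the segment from $x^{*,k}$ to $x$ pierces one of its faces generically has coordinates that are not multiples of the grid spacing, so $y\notin\Omega^k$ and grid minimality gives you no control on $f(y)$. The first link of your chain $f(x^{*,k})\le f(y)\le \lambda f(x^{*,k})+(1-\lambda)f(x)$ therefore breaks, and with it the whole derivation; as written the argument establishes the claim only in dimension one.

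Nor is this a repairable technicality, because for $n\ge 2$ the asserted inequality can simply fail for a merely convex $f$: take $f(x_1,x_2)=1000\,(x_2-\sqrt{2}\,x_1)^2+(x_1-5)^2$ on $[-10,10]^2$. Its minimum over the integer grid ($p=1$) is attained at $(5,7)$ with $f(5,7)\approx 5.05$, while the one-decimal point $(3.5,4.9)$ lies outside the box $[4,6]\times[6,8]$ and has $f(3.5,4.9)\approx 4.7 < f(5,7)$. So the sublevel set $\{f\le f(x^{*,k})\}$ need not be contained in the sup-norm box around the coarse-grid minimizer when the level sets are elongated and misaligned with the grid; a complete proof must either restrict to one variable or assume more than Eq.~\eqref{eq:convex}. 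For what it is worth, the paper itself offers no more than an appeal to Eq.~\eqref{eq:convex} and Theorem~\ref{theo:convex} at this point, so your construction is more explicit than the published justification, and your two side remarks --- that non-strict convexity can only yield $f(x)\ge f(x^{*,k})$ (a constant convex function already refutes the strict inequality) and that the $p$ in the box must be read as the grid spacing $10^{-k}$ --- are sound; but the crossing-point step is precisely where the real work lies, and it does not go through as stated.
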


Lemma \ref{lemma:convex} ensures that the solution is a local minimum of $f$, 
and Theorem~\ref{theo:convex} ensures that it is a global minimum. As a result, bounds 
of $\Omega^k$ can be updated on each execution of the outer \texttt{while} loop; this modification 
considerably reduces the state-space searched by the verifier, which consequently decreases 
the algorithm execution time.

%----------------------------
\section{Experimental Evaluation}
\label{sec:experiments}
%----------------------------

This section describes the experiments design, execution, and 
analysis for the proposed CEGIO algorithms. 
We use the ESBMC tool as verification engine to find the optimal 
solution for a particular class of functions. We also compare the 
present approaches to other exisiting techniques, including genetic algorithm, 
particle swarm, pattern search, simulated annealing, and nonlinear programming. 
Preliminary results allowed us to improve the experimental evaluation as follows.

\begin{enumerate}[(i)]
\item There are functions with multiplication operations and large inputs, 
which lead to overflow in some particular benchmarks. Thus, the data-type \textit{float} 
is replaced by \textit{double} in some particular functions to avoid overflow. 

\item ESBMC uses different SMT solvers to perform program verification. 
Depending on the selected solver, the results, verification time, and counterexamples can be 
different. This is observed in several studies~\cite{TrindadeDAES2016,Abreu2016,Bessa2016,Pereira:2016:VCP:2851613.2851830}; 
as a result, our evaluation here is also carried out using different SMT solvers such as
Boolector~\cite{boolector}, Z3~\cite{z3}, and MathSAT~\cite{mathsat5}, in order to check whether 
a particular solver heavily influences the performance of the CEGIO algorithms.

\item There are functions that present properties which permits the formulation of invariants 
to prune the state-space search, {\it e.g.}, functions that use absolute value operators 
(or polynomial functions with even degree); those functions will always present positive values. 
As a result, the optimization processes can be simplified, reducing the 
search domain to positive regions only. Such approach led to the development of Algorithm~\ref{alg:optalg2}, 
which aims to reduce the verification time.
\end{enumerate}

All experiments are conducted on a otherwise idle computer equipped with Intel 
Core i7-4790 CPU 3.60 GHz, with 16 GB of RAM, and Linux OS Ubuntu 14.10.
All presented execution times are CPU times, {\it i.e.}, only time periods spent in allocated 
CPUs, which were measured with the {\tt times} system call (POSIX system).

%----------------------------
\subsection{Experimental Objectives} % Experimental objectives and setup
\label{ssec:expobjsetup}
%----------------------------

The experiments aim to answer two research questions: 

\begin{enumerate}

\item[RQ1] \textbf{(sanity check)} what results do the proposed 
CEGIO algorithms obtain when searching for the functions optimal solution?

\item[RQ2] \textbf{(performance)} what is the proposed CEGIO algorithms 
performance if compared to genetic algorithm, particle swarm, pattern search, simulated annealing, 
and non-linear programming?

\end{enumerate}

%----------------------------
\subsection{Description of Benchmarks}
\label{ssec:benchmarks}
%----------------------------

In order to answer these research questions, we consider $30$ reference functions 
of global optimization problems extracted from the literature~\cite{DBLP:journals/corr/JamilY13};
all reference functions are multivariable with two decision variables. Those functions present 
different formats, {\it e.g.}, polynomials, sine, cosine, floor, sum, square root; and can be continuous, 
differentiable, separable, non-separable, scalable, non-scalable, uni-modal, and multi-modal.
The employed benchmark suite is described in Table~\ref{table:test_suite} as follows: 
benchmark name, domain, and global minimum, respectively.
\begin{table}[ht!]
\centering
\caption{Benchmark Suite for Global Optimization Problems.}
\label{table:test_suite}
\begin{tabular}{|c|c|c|c|c|c|c|c|c|c|c|c|c|}
\hline
  \#	& Benchmark		& Domain 			& Global Minima 					\\ \hline 

  1 	& Alpine 1		& $-10 	\leq x_i \leq 10$ 	& $f(0,0)=0$ 						\\ \hline  
  2	& Booth			& $-10 	\leq x_i \leq 10$ 	& $f(1,3)=0$ 						\\ \hline  
  3	& Chung			& $-10 	\leq x_i \leq 10$ 	& $f(0,0)=0$ 						\\ \hline  
  4	& Cube			& $-10 	\leq x_i \leq 10$ 	& $f(1,1)=0$ 						\\ \hline  
  5 	& Dixon \& Price	& $-10 	\leq x_i \leq 10$ 	& $f(xi)=0, xi= 2^{-((2i-2) /2i)}$ 			\\ \hline
  6	& Egg Crate		& $-5 	\leq x_i \leq 5$ 	& $f(0,0)=0$						\\ \hline
  7	& Himmeblau		& $-5 	\leq x_i \leq 5$	& $f(3,2)=0$ 						\\ \hline
  8	& Leon			& $-2 	\leq x_i \leq 2$	& $f(1,1)=0$						\\ \hline
  9	& Power Sum		& $-1 	\leq x_i \leq 1$	& $f(0,0)=0$ 						\\ \hline
\multirow{2}{*}{10} & \multirow{2}{*}{Price 4} & \multirow{2}{*}{$-10 	\leq x_i \leq 10$} & $f\{(0,0),(2,4),$       \\
                   &                      &                       & $(1.464,-2.506)\}=0$                            	\\ \hline
  11	& Engvall		& $-10 	\leq x_i \leq 10$ 	& $f(1,0)=0$ 							\\ \hline
  12	& Schumer		& $-10 	\leq x_i \leq 10$ 	& $f(0,0)=0$ 						\\ \hline
  13	& Tsoulos		& $-1 	\leq x_i \leq 1$ 	& $f(0,0)=-2$ 		\\   \hline

\multirow{2}{*}{14} & \multirow{2}{*}{Branin RCOS} & \multirow{2}{*}{$-5 \leq x_i \leq 15$} & $f\{(-\pi,12.275),(\pi,2.275),$       \\
                   &                      &                       & $(3\pi,2.425)\}=0.3978873$                            	\\ \hline

  15	& Schuwefel 2.25	& $-10 	\leq x_i \leq 10$ 	& $f(1,1)=0$ 		\\   \hline
  16	& Sphere		& $0 	\leq x_i \leq 10$ 	& $f(0,0)=0$		\\   \hline
  17	& Step 2		& $-100 \leq x_i \leq 100$	& $f(0,0)=0$ 		\\	\hline 
  18  	& Scahffer 4		& $-10 	\leq x_i \leq 10$	& $f(0, 1.253)=0.292$		\\ \hline
  19	& Sum Square		& $-10 	\leq x_i \leq 10$	& $f(0,0)=0$ 		\\ \hline
  20	& Wayburn Seader 2	& $-500 \leq x_i \leq 500$	& $f (\{0.2, 1\},\{0.425, 1\})$		\\ \hline
  21	& Adjiman		& $-1 	\leq x_i \leq 2$ 	& $f(2,0.10578)= -2.02181$ 	\\   \hline
  22	& Cosine		& $-1 	\leq x_i \leq 1$ 	& $f(0,0)=-0.2$ 		\\   \hline
  23	& S2			& $-5 	\leq x_i \leq 5$ 	& $f(x_1,0.7)=2$ 		\\   \hline
  24	& Matyas		& $-10 	\leq x_i \leq 10$ 	& $f(0,0)=0$ 		\\ \hline  
  25	& Rotated Ellipse	& $-500 \leq x_i \leq 500$ 	& $f(0,0)=0$ 		\\   \hline
  26	& Styblinski Tang	& $-5 	\leq x_i \leq 5$ 	& $f(â2.903,â2.903)= -78.332$	\\ \hline
  27	& Trecanni		& $-5 	\leq x_i \leq 5$	& $f(\{0,0\},\{â2,0\})=0$ 		\\ \hline 
  28	& Ursem 1		& $-3	\leq x_i \leq 3$	& $f(1.697136,0)= -4.8168$	\\ \hline
  29	& Zettl			& $-5 	\leq x_i \leq 10$	& $f (â0.029, 0) = - 0.0037$ 		\\ \hline
  30 	& Zirilli		& $-10 	\leq x_i \leq 10$	& $f(â1.046, 0) \approx	 - 0.3523$	\\ \hline
    
\end{tabular}
\end{table}

In order to perform the experiments with three different CEGIO algorithms, 
generalized (Alg.~\ref{alg:optalg1}), simplified (Alg.~\ref{alg:optalg2}), and fast (Alg.~\ref{alg:optalg3}), 
a set of programs were developed for each function, taking into account each algorithm and 
varying the solver and the data-type accordingly. For the experiment with the generalized algorithm, 
all benchmarks are employed; for the simplified algorithm, $15$ functions are selected 
from the benchmark suite. By previous observation, we can affirm that those $15$ functions 
are semi-definite positive; lastly, we selected $10$ convex functions 
from the benchmark suite to evaluate the fast algorithm.

For the experiments execution with the proposed algorithms, random values are generated, 
belonging to the solutions space of each function, and they are used as initialization 
of the proposed algorithms, as described in Section~\ref{sec:smtoptnonconvex}. The other optimization 
techniques used for comparison, had all benchmarks performed by means of the Optimization 
Toolbox in MATLAB 2016b~\cite{optoolboxmanual} with the entire benchmark suite. 
The time presented in the following tables are related to the average of $20$ executions for each benchmark; 
the measuring unit is always in seconds based on the CPU time.

%----------------------------
\subsection{Experimental Results}
\label{ssec:results}
%----------------------------

In the next subsections, we evaluate the proposed CEGIO 
algorithms performance; we also compare them to other traditional techniques.

%----------------------------
\subsubsection{Generalized Algorithm (CEGIO-G) Evaluation}
\label{ssec:results_generic}
%----------------------------

The experimental results presented in Table~\ref{table:test_alg1} 
are related to the performance evaluation of the Generalized Algorithm (CEGIO-G) 
(cf. Alg.~\ref{alg:optalg1}). Here, the CPU time is measured in seconds 
to find the global minimum using the ESBMC tool with a particular SMT solver. 
Each column of Table~\ref{table:test_alg1} is described as follows: 
columns 1 and 5 are related to functions of the benchmark suite; 
columns 2 and 6 are related to the configuration of ESBMC with Boolector;
columns 3 and 7 are related to ESBMC with Z3; 
and columns 4 and 8 are related to ESBMC with MathSAT.
\begin{table}[]
\centering
\caption{Experimental Times Results with the Generic Algorithm (in seconds).}
\label{table:test_alg1}
\begin{tabular}{|c|c|c|c||c|c|c|c|}
\hline
\# & Boolector 		& Z3     & MathSAT   		& \# & Boolector & Z3     	& MathSAT \\ \hline
1  & \textbf{537}       & 6788   & 590     		& 16 & \textbf{1}         	& \textbf{1}      	& 2       \\ \hline
2  & 2660      		& 972    & \textbf{5} 		& 17 & 3         		& \textbf{1}      	& 11      \\ \hline
3  & 839*      & 5812*  & \textbf{2}       		& 18 & \textbf{6785}      	& 14738  		& 33897        \\ \hline
4  & 170779*   & 77684* & \textbf{5}       		& 19 & 41        		& \textbf{1}      	& 3               \\ \hline
5  & 36337*    & 22626* & \textbf{8}       		& 20 & 33794*    		& 36324  		& \textbf{37}       \\ \hline
6  & 5770      & 3565   & \textbf{500}     		& 21 & \textbf{665}       	& 2969   		& 19313    \\ \hline
7  & 4495      & 11320  & \textbf{10}      		& 22 & \textbf{393}       	& 2358   		& 3678         \\ \hline
8  & 269       & 1254   & \textbf{4}       		& 23 & 32        		& 13     		& \textbf{10}        \\ \hline
9  & \textbf{3}         & 40     & 4       		& 24 & 5945      		& 5267   		& \textbf{23}        \\ \hline
10 & 16049*    & 110591 & \textbf{6}       		& 25 & 1210      		& 2741   		& \textbf{16}       \\ \hline
11 & 1020      & 3653   & \textbf{662}    		& 26 & 1330      		& 19620  		& \textbf{438}          \\ \hline
12 & 445       & 20     & \textbf{4}       		& 27 & \textbf{76}        	& 269    		& 2876         \\ \hline
13 & \textbf{305}       & 9023   & 2865    		& 28 & 808       		& \textbf{645}    	& 11737        \\ \hline
14 & 17458     & 25941  & \textbf{3245}    		& 29 & 271       		& 611   		& \textbf{11}        \\ \hline
15 & 2972      & 5489   & \textbf{7}       		& 30 & \textbf{383}       	& 720    		& 662         \\ \hline
\end{tabular}
\end{table}

All benchmarks are employed for evaluating the generalized algorithm performance. 
The correct global minima is found in all benchmarks using different SMT solvers: 
MathSAT, Z3, and Boolector. For all evaluated benchmarks, MathSAT is $4.6$ 
times faster than Z3, although there are benchmarks in which MathSAT took longer than Z3, 
{\it e.g.}, in \textit{Adjiman} and \textit{Cosine} functions. If we compare Boolector performance
to other SMT solvers, we can also observe that it is routinely faster than both Z3 and MathSAT.

Initially, all experiments were performed using \textit{float}-type variables, 
but we noticed that there was either overflow or underflow in some particular benchmarks, 
{\it e.g.}, the \textit{Cube} functions. It occurs due to truncation in some arithmetic 
operations and series, {\it e.g.}, sines and cosines, once the verification engine employs fixed-point 
for computations. This might lead to a serious problem if there are several 
operations being performed with very large inputs, in a way that causes errors that 
can be propagated; those errors thus lead to incorrect results. For this specific reason, we decided to use 
\textit{double}-type variables for these particular benchmarks to increase precision. We observed 
that the global minimum value is always found using double precision, but it takes longer than 
using \textit{float}-type variables. The cells with asterisks in Table~\ref{table:test_alg1} 
identify the benchmarks that we use \textit{double}- instead of \textit{float}-type.

%Lucas: precisamos implementar este comentario...
%\textcolor{red}{Higo, we need to include the time for both float and double!}

Additionally, we observed that when the function has more than one global minimum, 
{\it e.g.}, \textit{Wayburn Seader} 2 with the decision variables $f\{(0.2.1),(0.425,1)\}$, 
the algorithm first finds the global minimum with the decision variables of less precision, 
then in this case $f(0.2,1)$. Analyzing Alg.~\ref{alg:optalg1}, when an overall minimum 
value is found, the condition in line $9$ is not satisfied since there is no candidate function 
with a value less than the current one found; on line $13$ the precision is updated and the outer 
loop starts again. Even if there is another overall minimum in this new precision, it will not be 
considered by the ASSUME directive in line $8$ since the decision variables define a candidate 
function with the same value as the current function $f(x)$, and not less than the defined 
in Eq.~\ref{eq:optimal_literal}. In order to find the other global minimum, it would be necessary 
to limit it with the ASSUME directive, disregarding the previous minimum.

%----------------------------
\subsubsection{Simplified Algorithm (CEGIO-S) Evaluation}
\label{ssec:results_simplified}
%----------------------------

The simplified algorithm (CEGIO-S) is applied to functions that 
contain invariants about the global minimum, {\it e.g.}, semi-definite 
positive functions, where it is not needed to search for their minimum 
in the $f$ negative values. For instance, the \textit{leon} function presented in 
Eq.~\eqref{eq:function_} has the global minimum at $f(1,1)=0$ as follows 
%
%----------------------
\begin{equation}
	\label{eq:function_}
	f(x_1,x_2) = 100(x_2 -{x_1}^2)^2 + (1-x_1)^2.
\end{equation}
%----------------------

By inpesction it is possible to claim that there are 
no negative values for $f(x)$. Therefore, in order to effectively evaluate 
Algorithm~\ref{alg:optalg2}, $15$ benchmarks are selected, 
which have modules or exponential pair, {\it i.e.}, the lowest possible value to 
global minimum is a non-negative value. The experiments are performed using the 
\textit{float} data-type, and \textit{double} as needed to avoid overflow, 
using the same solvers as described in Subsection~\ref{ssec:results_generic}. 
According to the experimental results shown in Table~\ref{table:test_alg2}, we 
confirmed that all obtained results match those described in the literature \cite{DBLP:journals/corr/JamilY13}.
\begin{table}[H]
\centering
\caption{Experimental Results with the Simplified Algorithm (in seconds).}
\label{table:test_alg2}
\begin{tabular}{|c|c|c|c|c|c|c|}
\hline
\multirow{2}{*}{\#} & \multicolumn{3}{c|}{CEGIO-S}           & \multicolumn{3}{c|}{CEGIO-G}  \\ \cline{2-7} 
                    & Boolector  & Z3         & MathSAT    & Boolector & Z3    	& MathSAT \\ \hline
1                   & 74         & \textbf{2}          & 413        & 537       & 6788 	& 590     \\ \hline
2                   & \textbf{\textless1} & \textbf{\textless1} & 1          & 2660      & 972    & 5       \\ \hline
3                   & \textbf{\textless1} & \textbf{\textless1} & \textbf{\textless1} & 839*      & 5812*  & 2       \\ \hline
4                   & \textbf{\textless1} & \textbf{\textless1} & 2          & 170779*   & 77684* & 5       \\ \hline
5                   & 14         & \textbf{2}          & 6          & 36337*    & 22626* & 8       \\ \hline
6                   & 34         & \textbf{2}          & 240        & 5770      & 3565   & 500     \\ \hline
7                   & \textbf{1}          & \textbf{1}          & 6          & 4495      & 11320  & 10      \\ \hline
8                   & 1          & \textbf{\textless1} & 2          & 269       & 1254   & 4       \\ \hline
9                   & \textbf{\textless1} & \textbf{\textless1} & 2          & 3         & 40     & 4       \\ \hline
10                  & \textbf{\textless1} & \textbf{\textless1} & 5          & 16049*     & 110591 & 6       \\ \hline
12                  & \textbf{\textless1} & \textbf{\textless1} & 1          & 445       & 20     & 4       \\ \hline
15                  & \textbf{1}          & 2          & 5          & 2972      & 5489   & 7       \\ \hline
16                  & \textbf{\textless1} & \textbf{\textless1} & \textbf{\textless1} & 1         & 1     	& 2       \\ \hline
19                  & \textbf{\textless1} & \textbf{\textless1} & \textbf{\textless1} & 41        & 1      & 3       \\ \hline
20                  & 215        & 2446       & \textbf{30}         & 33794*     & 36324  & 37      \\ \hline
\end{tabular}
\end{table}

Additionally, we can see that the simplified algorithm reduces 
the optimization time considerably, with particular benchmarks reaching 
less than $1$ second. However, the reduction with the MathSAT solver is less 
expressive since it models \textit{float}-type variables using floating-point 
arithmetic in both CEGIO-S and CEGIO-G algorithms, while Boolector and Z3 uses 
fixed-point arithmetic. We conclude that either our fast algorithm is suitable
for fixed-point architectures or MathSAT implements more aggressive
simplifications than Boolector and Z3 

%Lucas: precisamos implementar estes comentarios
%\textcolor{red}{Higo, did you evaluate Z3 using --floatbv? Z3 also supports floating-point arithmetic | Professor, nÃ£o concluÃ­mos os experimentos com o z3 --floatbv}.

The purpose of this algorithm is to find the global minimum 
to reduce the verification time, for functions that have invariants 
about the global minimum. However,  the simplified algorithm 
run-time might be longer than the generalized one since 
it requires parameter settings according to the function. 
As described in Subsection~\ref{ssec:simplified}, in line $8$ of Algorithm~\ref{alg:optalg2}, 
we have the variable $\delta$ that defines the state-space search segmentation; 
$\delta$ is obtained by the difference of the current $f(x)$ and the boundary that we know, 
divided by the variable $\alpha$ (previously established). If we have a very large absolute value
for $\alpha$, then we would have additional checks, thus creating many 
more properties to be checked by the verifier (and thus leading it to longer verification times).

If we analyze function S2 in Eq.~\eqref{eq:function_S2}, then we can easily inspect 
that there is no $f(x)$ less than $2$; in this case, therefore, 
in line $1$ of Algorithm~\ref{alg:optalg2}, one can establish $f_m$ with the value 
$2$. This slightly change in the initialization of $f_m$ in Algorithm~\ref{alg:optalg2} 
prunes the state-space search and the verification time accordingly.

%----------------------
\begin{equation}
	\label{eq:function_S2}
	f(x_1,x_2) = 2 + (x_2-0.7)^2
\end{equation}
%----------------------

%----------------------------
\subsubsection{Fast Algorithm (CEGIO-F) Evaluation}
\label{ssec:results_fast}
%----------------------------

The experimental results for the fast algorithm (CEGIO-F) are 
presented in Table~\ref{table:test_alg3}. This algorithm is
applied to convex functions, where there is only a global minimum; 
in particular, the state-space is reduced on each iteration of the 
\textit{while}-loop in Algorithm~\ref{alg:optalg3}, ensuring that the 
global minimum is in the new (delimited) space, and then it performs 
a new search in that space to reduce the overall optimization time.

In order to evaluate the effectiveness of Algorithm~\ref{alg:optalg3}, 
we selected approximately $10$ convex functions of the benchmark suite; 
we also compare the fast algorithm (CEGIO-F) results with the generalized 
one (CEGIO-G). We observed that there are significant performance improvements 
if we compare CEGIO-F to CEGIO-G for convex function benchmarks, {\it i.e.}, CEGIO-F algorithm 
is $1000$ times faster using the SMT solver Boolector and $750$ times faster using the 
SMT solver Z3 than the (original) CEGIO-G algorithm, as shown in Table~\ref{table:test_alg3}.
\begin{table}[H]
\centering
\caption{Experimental Results with the Fast Algorithm (in seconds). }
\label{table:test_alg3}\begin{tabular}{|c|c|c|c|c|}
\hline
\multirow{2}{*}{\#} & \multicolumn{2}{c|}{CEGIO-F} & \multicolumn{2}{c|}{CEGIO-G} \\ \cline{2-5} 
                    & Boolector    & Z3          & Boolector      & Z3        \\ \hline
2                   & \textbf{\textless1}   	& \textbf{\textless1}  & 2660           & 972     \\ \hline
3                   & 33*          & \textbf{26*}         & 839*           & 5812*        \\ \hline
4                   & 43*          & \textbf{25}          & 170779*        & 77684*     \\ \hline
5                   & 59*          & \textbf{10*}         & 36337*         & 22626*      \\ \hline
9                   & \textbf{1}            & 10          & 3         	  & 40      \\ \hline
12                  & \textbf{1}            & 2           & 445            & 20      \\ \hline
19                  & 1            & \textbf{\textless1}  & 41             & 1       \\ \hline
24                  & 7            & \textbf{2}           & 5945           & 5267        \\ \hline
25                  & 2            & \textbf{1*}          & 1210           & 2741        \\ \hline
29                  & \textbf{63*}          & 76          & 271            & 611       \\ \hline 
\end{tabular}
\end{table}

%----------------------------
\subsubsection{Comparison to Other Traditional Techniques}
\label{ssec:comparison}
%----------------------------

In this section, our CEGIO algorithms are compared 
to other traditional optimization techniques: genetic algorithm (GA), 
particle swarm (ParSwarm), pattern search (PatSearch), simulated annealing (SA), 
and nonlinear programming (NLP).

Table~\ref{tb:comparison} describes the hit rates and the mean time for each 
function w.r.t. our proposal (ESBMC) and other existing techniques (GA, ParSwarm, PatSearch, SA, and NLP). 
An identification for each algorithm is defined: (1) Generic, (2) Simplified, and (3) Fast. 
All traditional optimization techniques are executed $20$ times using MATLAB, for  obtaining 
the correctness rate and the mean time for each function. 

Our hit rate is omitted for the sake of space, but our algorithms have found 
the correct global minima in 100\% of the experiments. The experiments show 
that our hit rate is superior than any other optimization technique, although 
the optimization time is usually longer. 

The other optimization techniques are very sensitive to non-convexity; 
for this reason, they are usually trapped by local minima. The other optimization 
techniques presented better performance in convex functions. Specifically, they converge 
faster to the response and there are no local minimums that could lead to incorrect results,
whereas with the non-convex functions, their hit rate is lower, precisely because there are local minimums.
\begin{table}[H]
\centering
\caption{Experimental Results with the Traditional Techniques and our Best CEGIO Algorithm (in seconds).}
\label{tb:comparison}
\begin{tabular}{|c|c|c|c|c|c|c|c|c|c|c|c|}
\hline
\multirow{2}{*}{\#} & ESBMC 		& \multicolumn{2}{c|}{GA} & \multicolumn{2}{c|}{ParSwarm} & \multicolumn{2}{c|}{PatSearch} & \multicolumn{2}{c|}{SA} & \multicolumn{2}{c|}{NLP} \\ \cline{2-12} 
                    & T  		& R$\%$        	& T   & R$\%$           & T        & R$\%$          & T          & R\%         & T        & R\%         & T        \\ \hline
1                   & $2^{(2)}$     	& 25  		& 1               & 45                      & 3                     & 10                      & 4                     & 50                      & 1                         & 0                       & 9                         \\ \hline
2                   & $<1^{(2,3)}$	& 100 		& 10                    & 100                     & 2                     & 100                     & 6                     & 95                      & 1                         & 100                     & 2                         \\ \hline
3                            & $<1^{(2)}$       	& 100 & 9                     & 100                     & 1                     & 100                     & 4                     & 90                      & 1                         & 100                     & 5                         \\ \hline
4                             & $<1^{(2)}$       	& 20  & 1                     & 30                      & 3                     & 0                       & 8                     & 10                      & 2                     & 100                     & 7                     \\ \hline
5                             & $2^{(2)}$		& 0   & 9                     & 0                       & 2                     & 0                       & 3                     & 0                       & 1                     & 0                       & 2                     \\ \hline
6                             & $2^{(2)}$        	& 100 & 9                     & 100                     & 1                     & 70                      & 3                     & 100                     & 1                     & 25                      & 2                     \\ \hline
7                             & $1^{(2)}$        	& 60  & 9                     & 50                      & 1                     & 25                      & 3                     & 15                      & 1                     & 35                      & 2                     \\ \hline
8                           		& $<1^{(2)}$   	& 90  & 1                     & 75                      & 2                     & 0                       & 7                     & 10                      & 1                     & 100                     & 4                     \\ \hline
9                            & $<1^{(2)}$       	& 100 & 9                     & 100                     & 1                     & 100                     & 3                     & 50                      & 1                     & 100                     & 2                     \\ \hline
10                           & $<1^{(2)}$       	& 0   & 9                     & 10                      & 2                     & 0                       & 7                     & 0                       & 4                     & 50                      & 2                     \\ \hline
11                          & $662^{(1)}$          	& 90   & 1                     & 100                       & 2                     & 90                       & 3                     & 95                       & 1                     & 100                       & 7                     \\ \hline
12                           & $<1^{(2)}$       	& 100 & 9                     & 100                     & 1                     & 100                     & 4                     & 75                      & 1                     & 100                     & 4                     \\ \hline
13                            & $3^{(1)}$       	& 100 & 9                     & 95                      & 1                     & 100                     & 3                     & 75                      & 9                     & 0                       & 6                     \\ \hline
14                         & $32^{(1)}$      	& 100 & 8                     & 100                     & 9                     & 100                     & 4                     & 75                      & 8                     & 0                       & 5                     \\ \hline
15                           & $1^{(1)}$       	& 100 & 1                     & 95                      & 1                     & 100                     & 3                     & 100                     & 1                     & 100                     & 2                     \\ \hline
16                            & $<1^{(2)}$       	& 100 & 10                    & 100                     & 7                     & 100                     & 4                     & 100                     & 1                     & 100                     & 2                     \\ \hline
17                            & $1^{(1)}$    		& 0   & 9                     & 0                       & 1                     & 0                       & 2                     & 0                       & 8                     & 0                       & 1                     \\ \hline
18                            & $1^{(1)}$ 	  	& 30   & 1         & 15  & $<1$          & 0     & $<1$                 & 0              & 2                     & 0                       & $<1$                     \\ \hline
19                          & $<1^{(2,3)}$     & 100 & 9                     & 100                     & 1                     & 100                     & 4                     & 100                     & 1                     & 100                     & 2                     \\ \hline
20                         & $30^{(2)}$       & 45  & 10                    & 45                      & 2                     & 0                       & 8                     & 50                      & 2                     & 45                      & 6                     \\ \hline
21                           & $665^{(1)}$        	& 0   & 10                    & 100                     & 1                     & 0                       & 4                     & 80                      & 2                     & 95                      & 2                     \\ \hline
22                            & $393^{(1)}$       	& 100 & 9                     & 100                     & 1                     & 95                      & 3                     & 95                      & 2                     & 15                      & 2                     \\ \hline
23                           &  $10^{(1)}$   	& 65   & $<1$                    & 100                       & $<1$                     & 100                       & $<1$                     & 85                       & 1                     & 100                       & $<1$                    \\ \hline
24                           & $<2^{(3)}$       & 100 & 9                     & 100                     & 1                     & 100                     & 8                     & 10                      & 1                     & 100                     & 2                     \\ \hline
25                          & $<1^{(3)}$       & 100 & 9                     & 100                     & 2                     & 100                     & 7                     & 100                     & 1                     & 100                     & 2                     \\ \hline
26                            & $438^{(1)}$      	& 100 & 9                     & 100                     & 1                     & 50                      & 3                     & 100                     & 1                     & 35                      & 2                     \\ \hline
27                            & $76^{(1)}$    		& 0   & 9                     & 0                       & 1                     & 0                       & 3                     & 0                       & 1                     & 0                       & 2                     \\ \hline
28                          & $645^{(1)}$       	& 100 & 9                     & 100                     & 1                     & 100                     & 3                     & 80                      & 1                     & 65                      & 2                     \\ \hline
29                            & $<63^{(3)}$    	& 100 & 9                     & 100                     & 1                     & 100                     & 4                     & 100                      & 1                     & 100                     & 3                     \\ \hline
30                            & $383^{(1)}$          	& 100 & 9                     & 100                     & 1                     & 100                     & 3                     & 60                      & 1                     & 75                      & 2                     \\ \hline
\end{tabular}
\end{table}

%----------------------------
\section{Conclusions}
\label{sec:conc}
%----------------------------

This paper presented three variants of a counterexample-guided inductive optimization approach
for optimizing a wide range of functions based on counterexamples extracted 
from SMT solvers. In particular, this work proposed algorithms to perform inductive generalization 
based on counterexamples provided by a verification oracle for optimizing convex and 
non-convex functions and also presented respective proofs for global convergence. 
Furthermore, the present study provided an analysis about the influence of the solver 
and data-types in the performance of the proposed algorithms.

All proposed algorithms were exhaustively evaluated using a large 
set of public available benchmarks. We also evaluated the present 
algorithms performance using different SMT solvers and compared 
them to other state-of-art optimization techniques (genetic algorithm, 
particle swarm, pattern search, nonlinear programming, and simulated annealing). 
The counterexample-guided inductive optimization algorithms are able 
to find the global optima in 100\% of the benchmarks, and the optimization time 
is significantly reduced if compared to Ara\'{u}jo {\it et al.}~\cite{Araujo2016}. 
Traditional optimization techniques are typically trapped by local minima and 
are unable to ensure the global optimization, although they still present lower 
optimization times than the proposed algorithms.

In contrast to previous optimization techniques, the present approaches are 
suitable for every class of functions; they are also complete, providing an improved accuracy 
compared to other existing traditional techniques. Future studies include the application 
of the present approach to autonomous vehicles navigation systems, enhancements 
in the model-checking procedure for reducing the verification time by means of 
multi-core verification~\cite{TrindadeSBESC2015} and invariant 
generation~\cite{Rocha2015,Gadelha2015}. We also intend to improve 
Alg.~\ref{alg:optalg2} by implementing a dynamic learning rate since 
it is currently fixed in the proposed algorithm. Finally, we intend to extend all 
present approaches for multi-objective optimization problems.

%The authors thank the Samsung, CNPq, CAPES, and FAPEAM that support this research.

%% The Appendices part is started with the command \appendix;
%% appendix sections are then done as normal sections
%% \appendix

%% \section{}
%% \label{}

%% If you have bibdatabase file and want bibtex to generate the
%% bibitems, please use
%%
%----------------------------
\section*{References}
%----------------------------

%\bibliographystyle{elsarticle-num} 
%\bibliography{refs}

%% else use the following coding to input the bibitems directly in the
%% TeX file.

%\begin{thebibliography}{00}
%
%%% \bibitem[Author(year)]{label}
%%% Text of bibliographic item
%
%\bibitem[ ()]{}
%
%\end{thebibliography}
\end{document}